\begin{document}

\twocolumn[ \icmltitle{%
  A Tractable Fully Bayesian Method for the Stochastic Block Model}


\icmlauthor{Kohei Hayashi}{hayashi.kohei@gmail.com}
\icmlauthor{Takuya Konishi}{takuya-ko@nii.ac.jp}
\icmladdress{National Institute of Informatics, Tokyo, Japan\\
Kawarabayashi Large Graph Project, ERATO, JST}

\icmlauthor{Tatsuro Kawamoto}{kawamoto@sp.dis.titech.ac.jp}
\icmladdress{Tokyo Institute of Technology, Tokyo, Japan}

\icmlkeywords{boring formatting information, machine learning, ICML}

\vskip 0.3in
]

%
%
%

\begin{abstract}
The stochastic block model (SBM) is a generative model revealing
macroscopic structures in graphs.  Bayesian methods are used for (i)
cluster assignment inference and (ii) model selection for the number
of clusters.  In this paper, we study the behavior of Bayesian
inference in the SBM in the large sample limit.  Combining
variational approximation and Laplace's method, a consistent
criterion of the fully marginalized log-likelihood is
established. Based on that, we derive a tractable algorithm that
solves tasks (i) and (ii) concurrently, obviating the need for an
outer loop to check all model candidates. Our empirical and
theoretical results demonstrate that our method is scalable in
computation, accurate in approximation, and concise in model
selection.
\end{abstract}


\section{Introduction}


Graph clustering has to goals: to detect densely connected subgraphs
and to detect structurally homogeneous subgraphs. While the former
often optimizes an objective function, the latter infers the latent
variables and the parameters of a generative model, for example, the
\emph{stochastic block model (SBM)}.
Despite its simplicity, the SBM is flexible enough to express a range
of structures hidden in real graphs~\citep[Section~2.1]{Leger2013},
and while many variants of the SBM have been proposed, the more
complex models do not always perform better \citep{Peixoto2015}. In
this study, we therefore focus on the most fundamental version of the
SBM.




To uncover the underlying block structures, we need to know the
cluster assignments of the SBM, which can be inferred, in a principled
way, using Bayesian
methods~\citep{Nowicki2001,hastings06,newman07,hofman08,daudin08,mariadassou2010,decelle11,latouche12}.
Incorporating with prior knowledge, Bayesian methods evaluate the
uncertainty of cluster assignments as posterior probabilities.

There are two types of Bayesian method: those that deal with the
uncertainty of both cluster assignments and model parameters and those
that deal with the uncertainty of cluster assignments only. In this
study, we distinguish between them and refer to the former as
\emph{full Bayes} and the latter as \emph{partial Bayes}.
Full Bayes involves intractable integrals and hence approximation
is necessary. Monte Carlo sampling~\citep{Nowicki2001} approximates
these integrals numerically.  Variational Bayesian (VB)
methods~\citep{latouche12} introduce the mean-field approximation and
solve the integrals as an optimization problem.
Despite having less legitimacy, partial Bayes is often favored in
practice because 
of its tractability. \citet{newman07} developed the expectation
maximization (EM) algorithm. \citet{daudin08} introduced variational
EM, which uses the mean-field approximation for posterior
inference. Belief propagation (BP) is an alternative approach for
posterior inference that retains the correlation information among
the cluster assignments and hence makes inference more accurate than the
mean-field approach~\citep{decelle11}.

Bayesian inference can also be used to determine the number of
clusters~\citep{daudin08,decelle11,latouche12}, which we denote by
$K$.  Among all the model candidates $1, \dots, K_{\mathrm{max}}$,
Bayesian theory selects the one that achieves the maximum marginal
likelihood~\citep{schwarz78}. Unfortunately, partial Bayesian methods
are inadequate for this task. Because partial Bayes does not take into
account the uncertainty of the model parameters, it overestimates the model
complexity.
To address this problem, \citet{daudin08} proposed an information
criterion that is, under some conditions, \emph{consistent}, meaning
that it will select the same model as the maximum marginal likelihood
in the large sample limit. Fully Bayesian methods like those proposed
by \citet{Nowicki2001} and \citet{latouche12} have also been
used. These methods, however, share the same problem: scalability. To
obtain the maximum, we need to compute the marginal likelihood for all
model candidates.  This implies that the model selection task is up to
$K_{\mathrm{max}}$ times time-consuming than the cluster assignment
inference task.

Although the SBM has been well analyzed in the \textit{dense} case,
interest has recently turned to \textit{sparse} graphs, in which the
number of edges grows only linearly with the number of nodes. For
example,  a person's Facebook friends do not increase as
the total number of Facebook users increases.  The analysis of 
sparse graphs is more realistic, but is theoretically
challenging because the block structure will be indistinguishable in
the large sample
limit~\cite{reichardt08,decelle11,Krzakala2013,Kawamoto2015Laplacian}.
%
%
Despite its importance, theoretical development of sparse graphs has
been limited compared to their dense counterparts. In particular, no
consistent model selection method for sparse graphs has yet been
established.

In the machine learning community, \emph{factorized asymptotic
  Bayesian (FAB) inference}~\citep{fujimaki12a,hayashi15} has recently been
developed, which approximates fully Bayesian inference for various
latent variable models.
%
%
The FAB method provides both an asymptotic expansion of the marginal
likelihood, termed the \emph{factorized information criterion (FIC)},
and a tractable algorithm to obtain it.  It has a distinctive
regularization effect that eliminates unnecessary model components in
the course of the inference; by initializing the model as $K_{\max}$,
the FAB algorithm converges at some $K\leq K_{\max}$, and 
$K$ can then be used as the selected model.

In this paper, we present an FAB framework for the SBM with the
following appealing features:
\begin{description}[topsep=0pt,itemsep=-1ex,partopsep=1ex,parsep=1ex]
\item[Accurate] Our approximation is consistent for both dense and
  sparse graphs.
\item[Tractable] Our algorithm employs EM-like alternating
  maximization, which is written in closed form.
\item[Scalable] $K$ is automatically selected during posterior
  inference, eliminating the outer loop for $1,\dots,K_{\max}$.
\item[Concise] The selected $K$ is small yet maintaining the same
  prediction accuracy as more complex models.
\item[No hyperparameters] All the parameters are
  estimated.
\end{description}
Our main contributions, which have not been addressed in previous FAB
studies, are as follows.
\begin{itemize}[topsep=0pt,itemsep=-1ex,partopsep=1ex,parsep=1ex]
\item For sparse graphs, the original FAB approximation is invalid
  because of model singularity. To avoid this, we analyze the effects
  of such cases exactly (Section~\ref{sec:asympt-eval-marg}).
\item We evaluate the asymptotic expansion of the joint marginal
  likelihood more rigorously, which improves the error rate
  (Section~\ref{sec:asympt-marg-likel}) and yields interpretable
  regularization terms (Section~\ref{sec:regularizers-effect}).
\item We derive a new BP rule for full Bayes
  (Section~\ref{sec:inference-q}).

\end{itemize}

\paragraph{Notation} Throughout this paper, we denote by $x\approx y$
the relation such that $x=y+O(1/N)$, where $N$ denotes the number of
nodes.



\section{Background}

\subsection{SBM}

Let $\nodes$ and $\edges$ be the sets of $N=|\nodes|$ nodes and
$M=|\edges|$ edges, respectively. The graph $(\nodes,\edges)$ can have
self-edges so that there are ${N+1 \choose 2}=N(N+1)/2$ possible edges.
In the SBM, each node belongs to one of $K$ clusters, and each edge is
assigned to one of $K(K+1)/2$ biclusters. For example, edge $ij$ is
assigned to bicluster $kl$ if node $i$ belongs to cluster $k$ and node
$j$ belongs to cluster $l$. Let us denote by $\X$ the adjacency
matrix, by $\z_i$ the $1$-of-$K$ coding vector representing the
cluster assignment of node $i$, by $\vPi$ the $K\times K$ affinity
matrix that specifies the probability that a pair of nodes to be
connected, and by $\vgamma$ the proportion of cluster assignments
($\sum_k\gamma_k=1$).
Then, the joint log-likelihood of the SBM can be written as
\begin{align}\label{eq:joint}
  &\log p(\X,\Z\mid \vPi, \vgamma, K) 
= \sum_{ik}z_{ik} \log \gamma_k\notag
\\&+\sum_{i\leq j}\sum_{kl}z_{ik}z_{jl}(\log \pi_{kl}^{x_{ij}} 
+ \log(1 - \pi_{kl})^{1-x_{ij}}) .
\end{align}
For brevity, we omit $K$ from the notation when it is obvious from the
context.


\subsection{EM Algorithm}\label{sec:em-algorithm}

By following a Bayesian manner, we marginalize $\Z$ out from the
likelihood. The naive marginalization requires all combinations of
$\Z$ to be computed, which is computationally infeasible. Instead, we
consider its variational form,
\begin{align}\label{eq:partial-variational}
  &\log p(\X| \vPi, \vgamma) 
  = \E_{\tilde q}[\log p(\X,\Z| \vPi, \vgamma)] + H(\tilde q) + \KL{\tilde q}{\tilde p},
\end{align}
where $\tilde q$ is any distribution over $\Z$, $H( q)=-\E_{ q}[\log
q(\Z)]$ is the entropy, $\KL{q}{ p}=\E_q[\log q(\Z)/p(\Z)]$ is the KL
divergence, and
\begin{align}\label{eq:posterior}
  \tilde p(\Z) =& p(\Z\mid\X,\vPi, \vgamma) =  \frac{p(\X,\Z\mid\vPi, \vgamma)}{\sum_{\Z}p(\X,\Z\mid\vPi, \vgamma)}
\end{align}
is the posterior.

The EM algorithm can be used to obtain the posterior $\tilde p(\Z)$
and the maximum likelihood estimators by iterating two steps called
the E-step and the M-step~\citep{newman07}. Let
\begin{align*}
  \bar\z&=\frac{1}{N}\sum_i \z_i,
  \quad
  \overline{\z\z^\T}
  =\frac{1}{N^2}(\sum_{ij} x_{ij}\z_i\z_j^\T
  +
  \diag(\sum_i x_{ii}\z_{i}))
\end{align*}
be the sufficient statistics. Here, $\bar z_k$ represents the
empirical proportion of nodes assigned to cluster $k$ and
$\overline{zz}_{kl}$, the $kl$-th element of $\overline{\z\z^\T}$,
represents the empirical average of edges assigned to
bicluster $kl$.
%
In the E-step, we update $\tilde q$ by minimizing the KL divergence with
the old estimators of $\vPi$ and $\vgamma$. Then, in the M-step, we maximize
$\E_{\tilde q}[\log p(\X,\Z| \vPi, \vgamma)]$ with respect to $\vPi$
and $\vgamma$, which are obtained in closed form.
\begin{proposition}\label{prop:joint-maximum}
  $\E_{\tilde q}[\log p(\X,\Z| \vPi, \vgamma)]$ has a unique maximum
  at $\vgamma=\ML{\vgamma}(\tilde q)\equiv\E_{\tilde
    q}[\bar\z]$. Also, for $\{(k,l)\mid\E_{\tilde q}[\bar
  z_k]\E_{\tilde q}[\bar z_l]>0\}$, $\E_{\tilde q}[\log p(\X,\Z| \vPi,
  \vgamma)]$ has a unique maximum at $\pi_{kl}=\ML{\pi}_{kl}(\tilde
  p)$ where, by denoting $\div$ the element-wise division,
  \begin{align*}
    \ML{\vPi}( q)\equiv
    \E_{ q}[\ZZ]
    \div
    \left(
      \E_{ q}[\bar \z]\E_{ q}[\bar \z]^\T + \frac{1}{N}\diag(\E_{ q}[\bar\z])
    \right).
  \end{align*}
\end{proposition}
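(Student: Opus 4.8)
The plan is to use that $\E_{\tilde q}[\log p(\X,\Z\mid\vPi,\vgamma)]$ decomposes additively into a part depending only on $\vgamma$ and a part depending only on $\vPi$, the latter further splitting into one scalar term per bicluster, so that the parameter families can be maximized separately. Taking the expectation of \eqref{eq:joint} term by term, the $\vgamma$-part is $N\sum_k\E_{\tilde q}[\bar z_k]\log\gamma_k$, and the $\vPi$-part is $\sum_{k\le l}(A_{kl}\log\pi_{kl}+B_{kl}\log(1-\pi_{kl}))$ with $A_{kl},B_{kl}\ge 0$ the $\tilde q$-expected counts of edges and of non-edges assigned to bicluster $kl$ (the diagonal $i=j$ self-edge terms of \eqref{eq:joint} contributing once).

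For $\vgamma$ I would maximize $N\sum_k\E_{\tilde q}[\bar z_k]\log\gamma_k$ over the simplex $\{\gamma_k\ge 0,\ \sum_k\gamma_k=1\}$. A Lagrange multiplier computation (equivalently Gibbs' inequality) gives the stationary point $\gamma_k\propto\E_{\tilde q}[\bar z_k]$, and since $\sum_k\E_{\tilde q}[\bar z_k]=1$ (each $\z_i$ is $1$-of-$K$) this is exactly $\vgamma=\E_{\tilde q}[\bar\z]=\ML{\vgamma}(\tilde q)$; strict concavity in the positive-mass coordinates forces uniqueness there. For $\vPi$, by the decoupling it suffices to maximize, for each $(k,l)$ with $\E_{\tilde q}[\bar z_k]\E_{\tilde q}[\bar z_l]>0$, the scalar $f(\pi)=A_{kl}\log\pi+B_{kl}\log(1-\pi)$ on $[0,1]$; when $A_{kl}+B_{kl}>0$ it is strictly concave with first-order condition $A_{kl}(1-\pi)=B_{kl}\pi$, giving the unique maximizer $\pi=A_{kl}/(A_{kl}+B_{kl})$ (the endpoints $0,1$ serving as maximizer in the degenerate cases $A_{kl}=0$ or $B_{kl}=0$, which the same ratio returns).

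The remaining step, which I expect to be the main obstacle, is to rewrite $A_{kl}/(A_{kl}+B_{kl})$ as $\E_{\tilde q}[\ZZ]_{kl}$ divided by the $kl$-entry of $\E_{\tilde q}[\bar\z]\E_{\tilde q}[\bar\z]^\T+\frac{1}{N}\diag(\E_{\tilde q}[\bar\z])$. For a deterministic assignment $\Z$ this is an exact counting identity: the number of edges in bicluster $kl$ equals $N^2$ times the $kl$-entry of $\overline{\z\z^\T}$ (the $\diag(\sum_i x_{ii}\z_i)$ correction in $\ZZ$ being precisely what handles self-edges consistently), while the number of pair-slots equals $N^2\bar z_k\bar z_l$ for $k\ne l$ and $N^2(\bar z_k^2+\frac1N\bar z_k)$ for $k=l$, the $\frac1N\diag$ term coming from the self-pairs $i=j$. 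Taking $\E_{\tilde q}$ and using the mean-field factorization $\E_{\tilde q}[z_{ik}z_{jl}]=\E_{\tilde q}[z_{ik}]\E_{\tilde q}[z_{jl}]$ for $i\ne j$, the only slack is the replacement of $\E_{\tilde q}[\bar z_k\bar z_l]$ by $\E_{\tilde q}[\bar z_k]\E_{\tilde q}[\bar z_l]$ in the normalizer; by the mean-field independence this gap is a variance/covariance term of order $1/N$, hence absorbed by the paper's $\approx$ relation, and (after the E-step, where $\tilde q$ is the posterior) the maximizer is $\ML{\vPi}(\tilde p)$ as claimed. The care needed is exactly in the self-edge and diagonal bookkeeping, and in verifying that this last substitution is $O(1/N)$.
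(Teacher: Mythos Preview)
The paper does not actually prove this proposition; it is stated without argument in Section~2.2 and then used (in Theorem~\ref{lem:joint} and Proposition~\ref{prop:hessian}) only in the setting where $\Z$ is fixed and deterministic, for which the formula is an exact counting identity. So there is no paper proof to compare against, and your decomposition---separating $\vgamma$ from $\vPi$, then reducing each $\pi_{kl}$ to the strictly concave scalar problem $A_{kl}\log\pi+B_{kl}\log(1-\pi)$---is the natural and correct route. Your handling of $\vgamma$, of strict concavity and uniqueness, and of the self-edge/diagonal bookkeeping in the deterministic case is fine.

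The weak spot is the last paragraph, where you pass from deterministic $\Z$ to general $\tilde q$. You invoke ``the mean-field factorization $\E_{\tilde q}[z_{ik}z_{jl}]=\E_{\tilde q}[z_{ik}]\E_{\tilde q}[z_{jl}]$ for $i\ne j$,'' but the paper explicitly does \emph{not} assume $\tilde q$ factorizes---Section~\ref{sec:belief-propagation} argues for BP precisely because the $\{\z_i\}$ are dependent under the posterior. Your fallback, that the slack is a covariance of order $1/N$, is circular: if mean-field held there would be no cross-node covariance to bound, and if it does not hold then $\mathrm{Cov}_{\tilde q}(\bar z_k,\bar z_l)=N^{-2}\sum_{i,j}\mathrm{Cov}_{\tilde q}(z_{ik},z_{jl})$ has $N^2$ terms and is not $O(1/N)$ without a separate decorrelation argument you have not supplied. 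The clean resolution is to state the proposition for point-mass $q$ (where it is exact and matches the paper's usage) or for mean-field $q$ (where the remaining $O(1/N)$ slack comes only from the $i=j$ diagonal terms $\sum_i\E[z_{ik}]\E[z_{il}]$), and to flag that for a general non-factored $\tilde q$ the denominator in $\ML{\vPi}(q)$ is an approximation to the true M-step maximizer. You might also note the apparent typo $\ML{\pi}_{kl}(\tilde p)$ versus $\ML{\pi}_{kl}(\tilde q)$ in the statement.
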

%

\subsection{BP}\label{sec:belief-propagation}

The E-step requires $\tilde p(\Z)$ to be computed, but its normalizing
constant is computationally infeasible. One solution is to restrict
the class of $q(\Z)$.  For example, \citet{latouche12} proposed a
variational EM approach that approximates $q(\Z)$ from the mean-field
expression $q(\Z)=\prod_iq(\z_i)$.  However, because $\{\z_i\}$ are
mutually dependent in the true posterior, this may cause
a huge approximation error.


BP is an alternative approach to obtaining $\tilde
p(\Z)$~\citep{decelle11}. BP aggregates local marginal information as
``message'' and computes marginalization efficiently by exploiting the
graphical structure of a probabilistic model.
%
%
For $(i,j)\in\edges$, the message is given as
\begin{align}\label{eq:message}
  \tilde\vmu^{j\to i} \propto \exp(\log\vgamma + \a_j +\sum_{s\in\nodes_j\quot i}\log\vPi\tilde\vmu^{s\to j})
\end{align}
where $\nodes_j=\{s|(s, j)\in\edges\}$ is the set of the neighbors of
node $j$ and
$a_{jk}=\sum_{s\notin\nodes_j}\log(\1-\vPi \tilde\vmu^{s\to j})_{k}$
is the log-factor of the unconnected nodes. The sum-product rule then
gives the marginal expectations as 
\begin{subequations}\label{eq:moments}
  \begin{align}
    \E[\z_j]&\propto\tilde\vmu^{j\to i} * \vPi\tilde\vmu^{i\to j},
    \\
    \E[\z_{i}\z_{j}\mid x_{ij}=1]&\propto \vPi * \tilde\vmu^{j\to
      i}(\tilde\vmu^{i\to j})^\T
  \end{align}
\end{subequations}
where $*$ denotes the Hadamard product.
Note that the graphical model of the SBM has many loops.  Thus, BP on
the SBM does not converge to the exact posterior.  Nevertheless, in
many cases, BP gives a better inference than variational approaches
using the mean-field approximation~\citep{decelle11}.

\subsection{Inference on a Sparse Graph}\label{sec:infer-sparse-graph}

When a graph is dense, the inference of $\Z$ is relatively easy.  We
say a graph is dense if there exists a constant $a$ such that
$a<\pi_{kl}<1-a$ for all $k$ and $l$, meaning that $M=\Theta(N^2)$.
\citet{celisse12} show that, if a graph is dense and assuming some
minor conditions, $\tilde p(\Z)$ converges almost surely to the
indicator of true cluster assignments for $N\to\infty$. Therefore, the
uncertainty of the posterior of $\Z$ decreases as $N$ increases, i.e.,
the posterior becomes as a point estimator at the large sample limit.

In contrast, the inference problem becomes more difficult in a sparse
graph~\citep{reichardt08,decelle11}. We say a graph is sparse when
$\pi_{kl}=\Theta(1/N)$ for all $k$ and $l$. In this case, $\pi_{kl}$
approaches zero as $N$ increases, and the uncertainty of $\Z$ remains
even as $N\to\infty$. Accurate inference of the posterior is thus more
important than the case of dense graphs, which motivates the use of
BP.

Sparseness also confers a computational advantage on BP.  For a dense
graph, the updating of all the BP messages requires
$O(N^3K^2)$---there are $O(N^2)$ messages for each node, each message
requires $O(K^2)$, and all nodes must be updated in a single sweep.
To reduce the computational burden, \citet{decelle11} proposed an
efficient approximation of $\a_j$ for a sparse graph as, by using the
fact that $\tilde\vmu^{s\to j} \approx \E[\z_s]$,
\begin{align}\label{eq:approx-a}
\a_j \approx - \sum_{s\notin\nodes_j}\vPi \tilde\vmu^{s\to j} 
\approx - \sum_{s \in \nodes}\vPi \, \E[\z_s]
\equiv \tilde\a. 
\end{align}
The vector $\tilde\a$ is node-independent, allowing the computation of
unconnected nodes in \eqref{eq:message} to be omitted.
%
In this approach, the messages from unconnected nodes are replaced by
an external field.  Therefore, in sparse graphs, the complexity is 
reduced to $O(MK^2)$, because there are $M$ edges
and 
$O(1)$ neighbors for each node.


%


\section{Asymptotic Evaluation of Marginals}\label{sec:asympt-eval-marg}

Hereafter, for mathematical convenience, we employ the
exponential-family representation of the SBM\eqref{eq:joint}:
\begin{align}\label{eq:joint-exp}
\log p(\X,\Z|\vTheta,\veta) 
=& \sum_{i\leq j}\sum_{kl}z_{ik}z_{jl}(x_{ij}\theta_{kl} -\psi(\theta_{kl})) \notag\\
&  + \sum_{i}(\sum_{k<K} z_{ik}\eta_{k} - \phi(\veta)), 
\end{align}
where $\veta\in(-\infty, \infty)^{K-1}$ is the natural parameter of
$\vgamma$ %
and $\phi(\veta)=\log(1+\sum_{k<
  K}\mathrm{e}^{\eta_k})$ is the cumulant generating function of the
multinomial distribution.  Similarly, $\theta_{kl}\in(-\infty,\infty)$
is the natural parameter of $\pi_{kl}$ and
$\psi(x)=\log(1+\exp(x))$ 
is the cumulant generating function of the Bernoulli distribution.

Note that, while the parametrization is different, both
\eqref{eq:joint} and \eqref{eq:joint-exp} represent the same
probabilistic model.
Indeed, there is a one-to-one mapping from one parametrization to the
other.  For example, the derivative of the cumulant generating
function is the mapping from the natural parameter to the original
parameter, e.g., $\psi'(\theta_{kl})=\pi_{kl}$ where $\psi'(\cdot)$ is
the sigmoid function. Also, $\phi'(\cdot)$ is the softmax function.
%

\subsection{Asymptotic Joint Marginal}

Our main goal is to obtain the fully marginalized log-likelihood. Using
the exponential-family representation, this is written as
\begin{align}\label{eq:full-marginal}
  \log p(\X) = \log \sum_{\Z}\int p(\X,\Z|\vTheta,\veta) p(\vTheta)p(\veta) \d\vTheta\d\veta 
\end{align}
where $p(\vTheta)$ and $p(\veta)$ are the prior distributions of the
parameters. The marginalization with respect to $\vTheta$ and $\veta$
has no analytical solution in general. Also, the computational
infeasibility of $\Z$ discussed in Section~\ref{sec:em-algorithm} still
remains.
%
We first resolve this issue of infeasibility by using the variational
form. As analogous to \eqref{eq:partial-variational}, the full
marginal~\eqref{eq:full-marginal} is rewritten as
\begin{align}\label{eq:full-variational}
  \log p(\X) 
  = \E_q[\log p(\X,\Z)] + H(q) + \KL{q}{p^*}
\end{align}
where $p^*(\Z)\equiv p(\Z|\X)$ is the marginalized posterior in
which, in contrast to $\tilde p$, the parameters are marginalized out.

In \eqref{eq:full-variational}, the joint marginal 
$$p(\X,\Z)=\int
p(\X,\Z|\vTheta,\veta)p(\vTheta)p(\veta)\d\vTheta\d\veta$$ still
contains the intractable integrals with respect to $\vTheta$ and
$\veta$. However, the joint marginal is more manageable than the full
marginal \eqref{eq:full-marginal}. In the joint marginal, \emph{$\Z$
  is not latent but rather is regarded as given.} That is, when
evaluating $p(\X,\Z)$, we can focus on a specific cluster assignments
determined by $\Z$, i.e., the uncertainty of $\Z$ is completely
excluded. In addition, as shown in
Proposition~\ref{prop:joint-maximum}, $p(\X,\Z|\vTheta,\veta)$ has a
unique maximum %
if there is no empty cluster (i.e., $\forall_k \bar z_k>0$.)  In this
situation, $p(\X,\Z|\vTheta,\veta)$ has a single peak and its main
contribution to the integral is made by the neighbor of the peak; the
contribution of the other part diminishes asymptotically. For this
type of integral, \emph{Laplace's method} gives a very accurate
approximation.
\begin{lemma}[Laplace's method~\citep{wong01}]\label{thm:laplace}
  Let $f:\mathcal{X}\to\R$ and $g:\mathcal{X}\to\R$ be infinitely
  differentiable functions on $\mathcal{X}\subseteq \R^D$.  Suppose
  the integral $I\equiv\log\int_{\mathcal{X}} \exp(-Nf(\x)) g(\x)\d\x$
  converges absolutely for sufficiently large $N$.  If $f$ has a
  unique maximum at $\hat\x$ that is an interior point of
  $\mathcal{X}$ and the Hessian matrix $\nabla\nabla f(\ML{\x})$ is
  positive definite, then
  \begin{align*}
    I\approx
    -Nf(\ML{\x}) + \log g(\ML{\x}) - \frac{1}{2}\log|\nabla\nabla f(\ML{\x})| 
    - \frac{D}{2}\log\frac{N}{2\pi}.
  \end{align*}
\end{lemma}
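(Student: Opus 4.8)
The plan is to run the classical localization-plus-Gaussian argument for Laplace-type integrals. Write $\mathcal{I}\equiv\int_{\mathcal{X}}\mathrm{e}^{-Nf(\x)}g(\x)\,\d\x$, so $I=\log\mathcal{I}$ (here $\ML{\x}$ must be read as the unique \emph{minimizer} of $f$, as forced by the positive-definite Hessian at an interior critical point and by the form of the conclusion: $\mathrm{e}^{-Nf}$ concentrates at $\ML{\x}$). Fix $\delta>0$ small enough that the closed ball $B_\delta\equiv\{\x:\|\x-\ML{\x}\|\le\delta\}$ lies in the interior of $\mathcal{X}$ and that, on $B_\delta$, the second-order Taylor remainder of $f$ at $\ML{\x}$ is bounded by $\tfrac14\lambda_{\min}\|\x-\ML{\x}\|^2$, where $\lambda_{\min}>0$ is the least eigenvalue of $\nabla\nabla f(\ML{\x})$; this is possible because that remainder is $O(\|\x-\ML{\x}\|^3)$. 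Split $\mathcal{I}=\mathcal{I}_{\mathrm{in}}+\mathcal{I}_{\mathrm{out}}$ over $B_\delta$ and its complement. First I would dispose of $\mathcal{I}_{\mathrm{out}}$: continuity of $f$ and uniqueness of the minimizer give $f(\x)\ge f(\ML{\x})+\epsilon$ on $\mathcal{X}\setminus B_\delta$ for some $\epsilon>0$, so, choosing $N_0$ large enough that $\mathcal{I}$ converges absolutely for $N\ge N_0$, $|\mathcal{I}_{\mathrm{out}}|\le \mathrm{e}^{-(N-N_0)(f(\ML{\x})+\epsilon)}\int_{\mathcal{X}}\mathrm{e}^{-N_0 f(\x)}|g(\x)|\,\d\x = O(\mathrm{e}^{-Nf(\ML{\x})}\mathrm{e}^{-\epsilon N})$, i.e.\ exponentially smaller than the leading term and hence absorbed into the final $O(1/N)$ error.

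For $\mathcal{I}_{\mathrm{in}}$, since $\nabla f(\ML{\x})=0$, Taylor's theorem gives $f(\x)=f(\ML{\x})+\tfrac12(\x-\ML{\x})^\T\nabla\nabla f(\ML{\x})(\x-\ML{\x})+O(\|\x-\ML{\x}\|^3)$ and $g(\x)=g(\ML{\x})+O(\|\x-\ML{\x}\|)$. Rescaling $\x=\ML{\x}+\mathbf{u}/\sqrt{N}$ turns $\d\x$ into $N^{-D/2}\d\mathbf{u}$ and $Nf(\x)$ into $Nf(\ML{\x})+\tfrac12\mathbf{u}^\T\nabla\nabla f(\ML{\x})\mathbf{u}+O(\|\mathbf{u}\|^3/\sqrt{N})$; the domain $\{\|\mathbf{u}\|\le\delta\sqrt{N}\}$ exhausts $\R^D$ as $N\to\infty$, and the choice of $\delta$ supplies the $N$-independent dominating function $\mathrm{e}^{-\frac14\lambda_{\min}\|\mathbf{u}\|^2}$. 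Using $\int_{\R^D}\mathrm{e}^{-\frac12\mathbf{u}^\T A\mathbf{u}}\,\d\mathbf{u}=(2\pi)^{D/2}|A|^{-1/2}$ for positive definite $A$, this gives
\begin{align*}
\mathcal{I}_{\mathrm{in}} = \mathrm{e}^{-Nf(\ML{\x})}\,N^{-D/2}\,g(\ML{\x})\,(2\pi)^{D/2}\,|\nabla\nabla f(\ML{\x})|^{-1/2}\,(1+O(1/N)).
\end{align*}
Adding $\mathcal{I}_{\mathrm{out}}$ and taking logarithms then yields
\begin{align*}
I = -Nf(\ML{\x}) + \log g(\ML{\x}) - \tfrac12\log|\nabla\nabla f(\ML{\x})| - \tfrac{D}{2}\log\tfrac{N}{2\pi} + O(1/N),
\end{align*}
which is the asserted expansion under the convention $x\approx y\iff x=y+O(1/N)$.

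The step I expect to be the crux is securing the $O(1/N)$ remainder (rather than a mere $O(1/\sqrt{N})$) in the rescaled integral, since the Taylor remainder $O(\|\mathbf{u}\|^3/\sqrt{N})$ is uniformly small only on $\{\|\mathbf{u}\|\ll N^{1/6}\}$, not on the whole ball. The standard remedy is to split $B_\delta$ once more at an intermediate radius $\rho_N$ with $\rho_N\to\infty$ and $\rho_N^3/\sqrt{N}\to0$: on $\{\|\mathbf{u}\|\le\rho_N\}$ one may legitimately expand $\mathrm{e}^{-O(\|\mathbf{u}\|^3/\sqrt{N})}$ and $g$, and the odd-order Taylor terms integrate to zero against the symmetric Gaussian, leaving corrections that are genuinely $O(1/N)$; on $\{\rho_N\le\|\mathbf{u}\|\le\delta\sqrt{N}\}$ the integrand is controlled by the Gaussian tail $\int_{\|\mathbf{u}\|\ge\rho_N}\mathrm{e}^{-\frac14\lambda_{\min}\|\mathbf{u}\|^2}\,\d\mathbf{u}=o(1/N)$. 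Positive-definiteness of $\nabla\nabla f(\ML{\x})$ is exactly what makes both the dominating function and this tail bound available; the remaining manipulations are routine. The argument is the textbook one of \citet{wong01}.
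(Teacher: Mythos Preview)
Your argument is the standard textbook derivation of Laplace's expansion and is essentially correct; the paper itself does not supply a proof of this lemma at all but simply quotes it from \citet{wong01}, so there is no ``paper's own proof'' to compare against. Two small technical caveats are worth flagging. First, the step ``continuity of $f$ and uniqueness of the minimizer give $f(\x)\ge f(\ML{\x})+\epsilon$ on $\mathcal{X}\setminus B_\delta$'' does not follow from continuity alone when $\mathcal{X}$ is unbounded; one needs the usual auxiliary hypothesis that $\inf_{\mathcal{X}\setminus B_\delta} f>f(\ML{\x})$, which is implicit in any statement of Laplace's method but is not spelled out in the lemma as written. Your trick of splitting off a factor $\mathrm{e}^{-N_0 f}$ to exploit absolute convergence is exactly the right way to control the tail once that infimum bound is in hand. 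Second, you rightly flag that the lemma's phrase ``unique maximum'' should read ``unique minimum'' (a positive-definite Hessian at an interior critical point forces a local minimum, and the conclusion only makes sense if $\mathrm{e}^{-Nf}$ concentrates at $\ML{\x}$); your reading is the intended one. With those caveats, and the implicit assumption $g(\ML{\x})>0$ needed for the logarithm, your sketch---including the intermediate-radius splitting to upgrade the remainder from $O(N^{-1/2})$ to $O(N^{-1})$ via vanishing of odd Gaussian moments---is complete and matches the classical treatment.
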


Letting $Nf(\x)=-\log p(\X,\Z|\vTheta,\veta)$ and
$g(\x)=p(\vTheta)p(\veta)$ with $\x=\{\vTheta,\veta\}$, the joint
marginal is approximated by Lemma~\ref{thm:laplace}.
Before the approximation, however, we have to check the conditions of
Laplace's method carefully, especially about 1) the regularity of the
Hessian matrix and 2) the interiority of the maximum. Although these
conditions are satisfied for most instances of $\Z$, they are
sometimes violated. For example, as
Proposition~\ref{prop:joint-maximum} suggests, if cluster $k$ is empty
(i.e., $\bar z_k=0$,) the joint likelihood takes the same value with
any $\{\theta_{kl}|k\leq l\leq K\}$ and $\{\theta_{lk}|1\leq l< k\}$,
i.e., the Hessian matrix becomes singular. Moreover, if no edge
belongs to bicluster $kl$ ($\bar \zz_{kl}=0$,) the maximum occurs at
$\theta_{kl}\to-\infty$, which is an endpoint and condition 2) is
violated.
In particular, the case of $\theta_{kl}\to-\infty$ is equivalent to
the case of $\pi_{kl}\to 0$ and thus is critical for sparse graphs.

For the violated cases, we evaluate the integral exactly. Combining
this with the result of Laplace's method, we obtain an asymptotic
expansion of $\ln p(\X,\Z)$, which is the main contribution of this
paper. The proof is shown in Appendix.
\begin{theorem}\label{lem:joint}
  Suppose $K=O(1)$ and $p(\vTheta)p(\veta)$ is infinitely
  differentiable. Given $\Z$, let $\S=\{k|\bar z_k>0\}$ be the set of
  the non-empty clusters and $\mathcal{S}'=\mathcal{S}\quot K$; let
  $M_*= N^2\min_{k\in\S}\bar z_{k}^2$ be the minimum size of the
  non-empty clusters.  We define the indicator function for non-empty
  clusters as $\delta_k=\mathbb{I}(\bar z_k>0)$ and denote by
  $K_{\bar\z}=\sum_k\delta_k$ the number of non-empty clusters. We use
  a similar notation for non-empty biclusters as
  $\Delta_{kl}=\ind(\zz_{kl}>0)$ and $K_{\zz}=\sum_{k\leq
    l}\delta_k\delta_l\Delta_{kl}$. Then, we have
\begin{align}\label{eq:asymptotic-joint}
  \log p(\X,\Z) =& \joint(\Z) + C + O(\min(N,M_*)^{-1}),
\end{align}
\begin{align*}
\joint(\Z) \equiv&
\log p(\X,\Z|\ML{\vTheta},\ML{\veta}) 
-R_1(\bar\z)
 -R_2(\bar \z, \overline{\z\z^\T}) - \ell_N,
\\
R_1(\bar\z) 
\equiv& \frac{1}{2}\sum_{k}\delta_k\log\bar z_k, 
\\
R_2(\bar \z, \overline{\z\z^\T}) 
\equiv& \frac{1}{2}\sum_{k\leq l}\delta_k\delta_l\Delta_{kl}\log \zz_{kl}(1-\ML{\pi}_{kl}),
\\
\ell_N\equiv& \frac{K_{\bar\z}-1}{2}\log N + \frac{K_{\zz}}{4}\log \frac{N(N+1)}{2},
\\
C\equiv& \log p(\ML{\vTheta}_{\S}) + \log p(\ML{\veta}_{\S})
\\
 &+\sum_{k\leq l}\delta_k\delta_l(1-\Delta_{kl})P_{kl} + Q_{\S'} + \const,
\\
P_{kl}\equiv& \log \int \left(\frac{1}{1+\e^{\theta_{kl}}}\right)^{\bar
      M_{kl}}p(\vTheta_{\quot \S}|\ML{\theta}_{kl})\d\vTheta_{\quot
      \S},
\\
Q_{\S}\equiv& \log\int \left(\frac{1}{1+\sum_{l\notin\S}\e^{\eta_l -
          \log\ML{\alpha}}}\right)^N p(\veta_{\quot
      \S}|\ML{\veta}_{\S})\d\veta_{\quot \S},
\end{align*}
where $\bar M_{kl} = \frac{N^2}{2}\bar z_k (\bar z_l +
\frac{\ind(k=l)}{N})$ and $\ML{\alpha} =
1+\sum_{k\in\S}\e^{\ML{\eta}_k}$.
\end{theorem}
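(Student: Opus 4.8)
The plan is to compute $p(\X,\Z)=\int p(\X,\Z\mid\vTheta,\veta)\,p(\vTheta)p(\veta)\,\d\vTheta\,\d\veta$ by splitting the parameter space according to which clusters and biclusters are non-empty, applying Lemma~\ref{thm:laplace} to the block of ``regular'' coordinates and evaluating the remaining integrals in closed form. For a fixed $\Z$, \eqref{eq:joint-exp} factorizes completely: the $\theta$-part decomposes over biclusters into $\sum_{k\le l}\bigl(m_{kl}\theta_{kl}-\bar M_{kl}\psi(\theta_{kl})\bigr)$ with $m_{kl}\equiv\bar M_{kl}\ML{\pi}_{kl}$ the observed edge count (so $m_{kl}\propto N^2\zz_{kl}$ and $\ML{\pi}_{kl}=m_{kl}/\bar M_{kl}$ by Proposition~\ref{prop:joint-maximum}), and the proportion part is $N\bigl(\sum_{k<K}\bar z_k\eta_k-\phi(\veta)\bigr)$. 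This exposes three regimes for the $\theta$-coordinates --- (a) $\delta_k\delta_l=0$, where $\theta_{kl}$ is absent from the likelihood; (b) $\delta_k\delta_l=1$ but $\Delta_{kl}=0$, where the only likelihood factor is $(1+\e^{\theta_{kl}})^{-\bar M_{kl}}$ with endpoint maximizer $\theta_{kl}\to-\infty$; (c) $\delta_k\delta_l\Delta_{kl}=1$, where by Proposition~\ref{prop:joint-maximum} there is a unique interior maximizer $\ML{\theta}_{kl}$ with $\psi''(\ML{\theta}_{kl})>0$ --- and two regimes for $\veta$: the $K_{\bar\z}-1$ free parameters attached to non-empty clusters (interior maximizer $\ML{\veta}_{\S'}$, Hessian $N\nabla^2\phi$ positive definite) and the ones attached to empty clusters (present only through $\phi$).

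First I would integrate out the singular and boundary coordinates exactly. The type-(a) coordinates do not appear in the likelihood, so integrating them merely renormalizes the prior into a conditional density; the type-(b) coordinates, once the regular $\vTheta_\S$ are pinned at $\ML{\vTheta}_\S$, contribute exactly $\sum_{k\le l}\delta_k\delta_l(1-\Delta_{kl})P_{kl}$; and for the empty-cluster $\veta$-coordinates, after the non-empty ones are pinned at $\ML{\veta}_{\S'}$ one has $\phi(\veta)=\log\bigl(\ML{\alpha}+\sum_{l\notin\S}\e^{\eta_l}\bigr)$ with $\ML{\alpha}=1+\sum_{k\in\S}\e^{\ML{\eta}_k}$, so $\e^{-N\phi}=\ML{\alpha}^{-N}\bigl(1+\sum_{l\notin\S}\e^{\eta_l-\log\ML{\alpha}}\bigr)^{-N}$: the prefactor $\ML{\alpha}^{-N}$ is exactly the empty-cluster part of $\log p(\X,\Z\mid\ML{\vTheta},\ML{\veta})$ and the residual integral is $\e^{Q_{\S'}}$. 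These steps are exact and carry no error.

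The residual integral is over the regular coordinates $(\vTheta_\S,\veta_{\S'})$ only, with a unique interior maximum and a block-diagonal positive-definite Hessian, so Lemma~\ref{thm:laplace} applies blockwise --- to the $\veta_{\S'}$-block at scale $N$ and dimension $K_{\bar\z}-1$, and to each $\theta_{kl}$-block at scale $\bar M_{kl}$ (which is at least $M_*/2$) and dimension $1$. To read off the regularizers I would use $\det\nabla^2\phi(\ML{\veta}_{\S'})=\prod_{k\in\S}\bar z_k$, the determinant of the multinomial covariance restricted to $\S$, which turns the $\veta$-block term $-\frac12\log|N\nabla^2\phi|$ into $-\frac{K_{\bar\z}-1}{2}\log N-R_1$ together with $\log p(\ML{\veta}_\S)$ and a constant; and $\bar M_{kl}\psi''(\ML{\theta}_{kl})=m_{kl}(1-\ML{\pi}_{kl})$, which, after extracting the powers of $N$ from $m_{kl}\propto N^2\zz_{kl}$, turns the $\theta$-block terms into $-R_2$, the bicluster part of $-\ell_N$, $\log p(\ML{\vTheta}_\S)$, and a constant. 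Collecting the maximized log-likelihoods into $\log p(\X,\Z\mid\ML{\vTheta},\ML{\veta})$, the determinant and $\log(N/2\pi)$ terms into $-R_1-R_2-\ell_N+\const$, and the prior-at-maximum terms together with $\sum P_{kl}$ and $Q_{\S'}$ into $C$, gives \eqref{eq:asymptotic-joint}; the only error comes from the blockwise Laplace steps, $O(1/N)$ from the $\veta$-block and $O(1/\bar M_{kl})$ from each $\theta$-block, hence $O(\min(N,M_*)^{-1})$ in total since $K=O(1)$.

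I expect the main obstacle to be controlling the Laplace error uniformly. In the sparse regime $\ML{\pi}_{kl}=m_{kl}/\bar M_{kl}=\Theta(1/N)\to 0$, so $\ML{\theta}_{kl}$ drifts towards the boundary $-\infty$ and $\psi''(\ML{\theta}_{kl})\to 0$; a naive reading of Lemma~\ref{thm:laplace} gives relative error $O(1/\bar M_{kl})$, but because the higher Laplace corrections are governed by ratios such as $\psi'''(\ML{\theta}_{kl})^2/\psi''(\ML{\theta}_{kl})^3$ and $\psi^{(4)}(\ML{\theta}_{kl})/\psi''(\ML{\theta}_{kl})^2$, which both blow up like $1/\ML{\pi}_{kl}$, one has to show the true relative error is $O(1/m_{kl})$ --- still $o(1)$ in the sparse case because $m_{kl}=\Theta(N)$ there --- and then bound $m_{kl}$ and $\bar M_{kl}$ below by a multiple of $\min(N,M_*)$. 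The accompanying bookkeeping --- exhaustively enumerating the empty and edgeless cases, checking absolute convergence and finiteness of $P_{kl}$ and $Q_{\S'}$ under the assumed smoothness of $p(\vTheta)p(\veta)$, relabeling so the baseline cluster lies in $\S$, pinning down the combinatorial constants relating $\bar M_{kl}$, $m_{kl}$ and $\zz_{kl}$, and excluding the opposite boundary $\ML{\pi}_{kl}=1$ --- is routine but is where the precise coefficients of $R_1$, $R_2$ and $\ell_N$ are fixed.
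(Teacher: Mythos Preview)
Your proposal is correct and follows essentially the same route as the paper: split $p(\X,\Z)$ into the $\vTheta$-integral and the $\veta$-integral, handle each $\theta_{kl}$ according to the three cases (both clusters empty; clusters non-empty but bicluster edgeless; regular), apply Laplace's method to the regular block, and separate the empty-cluster $\veta$-coordinates by the change of variable $\xi_l=\eta_l-\log\alpha$ to isolate $Q_{\S'}$. The Hessian identities you state --- $\det\nabla^2\phi(\ML{\veta}_{\S'})=\prod_{k\in\S}\bar z_k$ and $\bar M_{kl}\psi''(\ML\theta_{kl})=m_{kl}(1-\ML\pi_{kl})$ --- are exactly what the paper uses (via its Proposition~\ref{prop:hessian}) to produce $R_1$, $R_2$ and $\ell_N$.

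One point worth noting: your discussion of the sparse-regime Laplace error is actually \emph{more} careful than the paper's. The paper's Lemma~\ref{lem:deg-mar-theta} simply writes the Case~1 error as $O(1/N^2\bar z_k\bar z_l)=O(1/\bar M_{kl})$ without commenting on the fact that $\psi''(\ML\theta_{kl})=\ML\pi_{kl}(1-\ML\pi_{kl})\to 0$ when $\ML\pi_{kl}=\Theta(1/N)$. Your observation that the higher-order Laplace corrections scale like $\psi'''^2/\psi''^3\sim 1/\ML\pi_{kl}$ and hence the true relative error is $O(1/m_{kl})$ rather than $O(1/\bar M_{kl})$ is correct and is what is actually needed to justify the stated $O(\min(N,M_*)^{-1})$ bound in the sparse case; the paper leaves this implicit.
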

%
%
The result of Theorem~\ref{lem:joint} is fairly intuitive and
interpretable. Marginalization over non-empty clusters $\{k|\bar
z_k>0\}$ and biclusters $\{(k,l)|\zz_{kl}>0\}$ provides a BIC-like
``maximum likelihood + penalty'' term as $\joint(\Z)$. Since $\log
p(\X,\Z|\ML{\vTheta},\ML{\veta})$ is the maximum likelihood, it
monotonically increases as $K$ increases. In contrast, the value of
$\ell_N$ decreases on the order of $\log N$ as the number of non-empty
clusters increases, which penalizes model complexity. $R_1$ and $R_2$,
resulting from the Hessian matrix, represent additional model
complexity, where BIC does not have such term. These effects are
discussed in detail in Section~\ref{sec:regularizers-effect}.

The contribution of empty (bi)clusters is separated from the main term
and appear as $P_{kl}$ and $Q_{\S}$, which place an extra penalty on
model redundancy. The integrand of $P_{kl}$ is the $\bar M_{kl}$-th
power of the sigmoid function and the prior density, where $\bar
M_{kl}$ roughly represents the proportion of bicluster $kl$. Because
the $\bar M_{kl}$-th power of the sigmoid function has a change
point at $\theta=-\log \bar M_{kl}$, it can be approximated by a
step function where the step point is $-\log \bar M_{kl}$. This
approximates $P_{kl}$ as the log cumulative distribution of the prior:
$P_{kl}\simeq \log\Pr(\theta_{kl}\sim
p(\theta_{kl}|\ML{\vTheta}_{\quot \S})<-\log \bar M_{kl})$. Because
the logarithm of a cumulative distribution is non-positive, it
decreases the likelihood depending on the choice of the priors. A
similar observation holds for $Q_{\S}$.


%

\subsection{Asymptotic Marginal Likelihood}\label{sec:asympt-marg-likel}

By substituting \eqref{eq:asymptotic-joint} into
\eqref{eq:full-variational} and setting $q=p^*$, we obtain the
approximation of $\log p(\X)$, which we refer to as the
\emph{fully factorized information criterion}:
\begin{align}\label{eq:F2IC}
  \FFIC(K) =& \E_{p^*}[\log p(\X,\Z\mid\hat\vTheta,\hat\veta,K) - R_1(\bar\z) \notag
\\
  & - R_2(\bar \z,\overline{\z\z^\T})  - \ell_N + C] + H(p^*).
\end{align}
\begin{corollary}\label{cor:FIC}
  Under the assumptions of Theorem~\ref{lem:joint}, 
  \begin{align*}
    \log p(\X|K) = \FFIC(K) +O(1).
  \end{align*}
  In addition, if \assum{low-empty-prob-weak} $p^*$ satisfies $\Pr(\bar z_k\not=0\cap\bar z_k=o(N^{-{1\over
      2}}))=O(N^{-1})$ for all $k$,
  \begin{align*}
    \log p(\X|K) \approx \FFIC(K).
  \end{align*}
\end{corollary}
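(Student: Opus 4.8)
The plan is to read the corollary off the exact variational identity \eqref{eq:full-variational} together with Theorem~\ref{lem:joint}. Setting $q=p^*$ in \eqref{eq:full-variational} makes the KL term vanish, so $\log p(\X\mid K)=\E_{p^*}[\log p(\X,\Z)]+H(p^*)$. I would then insert the expansion of Theorem~\ref{lem:joint} under the expectation: for every $\Z$ it reads $\log p(\X,\Z)=\joint(\Z)+C+\epsilon(\Z)$ with $|\epsilon(\Z)|\le C_0\,\min(N,M_*(\Z))^{-1}$, where $M_*(\Z)=N^2\min_{k\in\S}\bar z_k^2$ and (the one point I would want to confirm from the proof of the theorem) the constant $C_0$ can be taken independent of $\Z$. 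Since \eqref{eq:F2IC} is exactly $\FFIC(K)=\E_{p^*}[\joint(\Z)+C]+H(p^*)$, subtracting gives $\log p(\X\mid K)-\FFIC(K)=\E_{p^*}[\epsilon(\Z)]$, and both assertions reduce to controlling this residual at the right rate.

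For the first assertion, a non-empty cluster holds at least one node, so $\bar z_k\ge 1/N$ for every $k\in\S$ and hence $M_*(\Z)\ge 1$ uniformly in $\Z$; therefore $\min(N,M_*(\Z))\ge 1$, $|\epsilon(\Z)|\le C_0$ pointwise, and $|\E_{p^*}[\epsilon(\Z)]|\le C_0=O(1)$, which is $\log p(\X\mid K)=\FFIC(K)+O(1)$.

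For the sharper $\approx$ statement I would split on the event $\{M_*<N\}$, which is exactly $\{\exists k:\ 0<\bar z_k<N^{-1/2}\}$. On its complement $M_*\ge N$, so $\min(N,M_*)=N$ and $|\epsilon(\Z)|\le C_0/N$; on the event itself I keep only $|\epsilon(\Z)|\le C_0$. Splitting the expectation, $|\E_{p^*}[\epsilon]|\le C_0/N+C_0\,\Pr_{p^*}(M_*<N)$, and a union bound over the $K=O(1)$ clusters together with the additional hypothesis (which is precisely designed to bound $\Pr_{p^*}(0<\bar z_k<N^{-1/2})$, up to the constant hidden in its $o(N^{-1/2})$, by $O(N^{-1})$) gives $\Pr_{p^*}(M_*<N)=O(1/N)$. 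Hence $\E_{p^*}[\epsilon]=O(1/N)$, i.e.\ $\log p(\X\mid K)\approx\FFIC(K)$.

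The routine part of this is the bookkeeping above; the substantive obstacle is the regime of a tiny-but-non-empty cluster. When $\bar z_k\to 0$ with $\bar z_k\ne 0$, the peak of $p(\X,\Z\mid\vTheta,\veta)$ flattens in the coordinates attached to cluster $k$, Laplace's method loses precision there, and $M_*$ is exactly the scale quantifying the loss; its worst case $M_*=O(1)$ is why, without the extra hypothesis, one cannot do better than $O(1)$. So the real work is in arguing that the hypothesis is mild — for instance that the posterior $p^*$ satisfies it under the identifiability conditions of Section~\ref{sec:infer-sparse-graph} — rather than in the error algebra. A secondary check I would not skip is the uniformity of the $O$-term of Theorem~\ref{lem:joint} in $\Z$: its constant must not leak through the prior density or the Hessian determinant evaluated at $\ML{\vTheta}$, but that is inherited from the theorem and need not be re-established here.
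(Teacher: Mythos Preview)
Your proposal is correct and follows exactly the route the paper itself indicates: the paper states that substituting \eqref{eq:asymptotic-joint} into \eqref{eq:full-variational} with $q=p^*$ yields \eqref{eq:F2IC}, and then asserts Corollary~\ref{cor:FIC} without further argument (the appendix proves only Theorem~\ref{lem:joint}). Your write-up is a careful fleshing out of that substitution, including the pointwise lower bound $M_*(\Z)\ge 1$ for the $O(1)$ claim and the split on $\{M_*<N\}$ for the $O(1/N)$ claim; the caveats you raise about the uniformity in $\Z$ of the Laplace constant and about the precise reading of the event in hypothesis~\ref{asm:low-empty-prob-weak} are legitimate technical points that the paper leaves implicit.
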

Corollary~\ref{cor:FIC} shows that $\FFIC$ is consistent with the
marginal log-likelihood.
In addition, if \ref{asm:low-empty-prob-weak} is satisfied, i.e., if
the almost-empty clusters having $o(\sqrt{N})$ nodes are rarely
generated by the marginal posterior, the approximation becomes more
accurate and the error decreases as $O(N^{-1})$.

%

\section{Posterior Inference and Model Selection}\label{sec:asympt-full-bayes}

\subsection{Lower Bound of $\FFIC$}\label{sec:lowerbound-ffic}

Computing $\FFIC$ is a nontrivial task due to four challenges: 
\begin{itemize}
\item[1)] evaluation of $\{P_{kl}\}$ and $Q_{\S}$,
\item[2)] estimation of
$\MJL{\vTheta}$ and $\MJL{\veta}$,
\item[3)] inference of $p^*(\Z)$, and
\item[4)] computation of $\E_{p^*}[\cdot]$ in $R_1,R_2$, and $\ell_n$.
\end{itemize}

To avoid 1), we employ the (improper) uniform priors for $\vTheta$ and
$\veta$. If $N<\infty$ and $|\ML{\eta}_{k}|<\infty$ for all non-empty
clusters, then $\{P_{kl}\}$ and $Q_{\S}$ with the uniform priors lose their
dependency on $N$ and become $P_{kl}=\log\frac{1}{2}$ and
$Q_{\S}=|\S|\log\frac{1}{2}$. Also, $\log p(\ML{\vTheta})$ and $\log
p(\ML{\veta})$ become constants. We therefore ignore $C$ in
\eqref{eq:asymptotic-joint} as a constant.

Difficulties 2)--4) are bypassed them by constructing a tractable
lower bound of $\FFIC$.

For 2), because the average of maxima is greater than or equal to the maximum
of the average,
\begin{align}
  \E_q[\log p(\X,\Z\mid\hat\vTheta,\hat\veta)]\geq
  \E_q[\log p(\X,\Z\mid\EML\vTheta,\EML\veta)]
\end{align}
holds for any $q(\Z)$, where 
$$\{\EML\vTheta,\EML\veta\}=\argmax_{\vTheta,\veta}\E_q[\log
p(\X,\Z|\vTheta,\veta)]$$ 
is the global maximizers having closed-form
solutions:
\begin{align}\label{eq:EML}
  \EML\theta_{kl} =& (\psi')^{-1}(\ML{\pi}_{kl}(q)),
\qquad
  \EML\eta_k = (\phi')^{-1}(\ML{\gamma}_k(q)).
\end{align}
$(\psi')^{-1}$ is the logit function and $(\phi')^{-1}$ is the
inverse softmax function. 

For 3), to obtain $p^*(\Z)$, we again use
Theorem~\ref{lem:joint}. Because $p^*(\Z)=p(\Z|\X)\propto p(\X,\Z)$,
collecting the $\Z$-related terms in \eqref{eq:asymptotic-joint} gives
$p^*(\Z)= q^*(\Z)(1+O(\min(N,M_*)^{-1}))$ where
\begin{align} \label{eq:marginal-posterior}
  q^*(\Z) &\propto p(\X,\Z|\ML{\vTheta},\ML{\veta})
  \e^{-R_1(\bar\z)-R_2(\bar
    \z,\overline{\z\z^\T})-\ell_N+C}.
\end{align}
We then use $q^*$ instead of $p^*$. Note that because of the
nonnegativity of the KL divergence, $\FFIC(p^*)\geq \FFIC(q)$ holds
for any $q(\Z)$, and using $q^*$ gives a lower bound. 

For 4), we obtain lower bounds using Jensen's inequality. For $R_1$,
we use a lower bound of $-\E[\delta_k\log\bar z_k]\geq -\E[\log(\bar
z_k + \frac{1}{N})]\geq -\log(\E\bar z_k + \frac{1}{N})$.
For $R_2$, because $\ML{\pi}_{kl}=\Theta(N^{-1})$ for a sparse
graph\footnote{Constructing a lower bound for a dense graph is also
  possible.}, $\log
\zz_{kl}(1-\ML{\pi}_{kl})=\log\zz_{kl}+\Theta(N^{-1})$ and the effect
of $(1-\ML{\pi}_{kl})$ is negligible. Also,
$-\E[\delta_k\delta_l\Delta_{kl}\log\zz_{kl}]\geq-\log(\E\zz_{kl}+\frac{1}{N^2})$. 
A similar lower bound holds for $\ell_N$.

By combining these, we obtain the lower bound of $\FFIC$ as
\begin{align}\label{eq:FFIC-lowerbound}
&\E_{q}[\log p(\X,\Z|\EML\vTheta,\EML\veta)] - \tilde R_1(\E_{q}\bar\z) 
- \tilde R_2(\E_{q}\overline{\z\z^\T})\notag
\\
&- \tilde\ell_N + H(q),
\end{align}
where $\tilde R_1(\bar\z)=\frac{1}{2}\sum_{k}\log (\bar z_k+
\frac{1}{N})$, $\tilde R_2(\overline{\z\z^\T})=\frac{1}{2}\sum_{k\leq
  l}\log (\zz_{kl} + \frac{1}{N^2})$, and
$\tilde\ell_N=\frac{K-1}{2}\log N + \frac{K(K+1)}{4}\log
\frac{N(N+1)}{2}$.

\subsection{Inference of $q(\Z)$}\label{sec:inference-q}

Similarly to the EM algorithm, we need to optimize $q$ in
\eqref{eq:FFIC-lowerbound} that tightens the lower bound. For that
purpose, we derive a new BP rule. 

Substituting the above approximations $(R_1\simeq\tilde R_1,
R_2\simeq\tilde R_2,\ell_N\simeq\tilde\ell_N)$ to
\eqref{eq:marginal-posterior}, the sum-product rule defines a message
for $(i,j)\in\edges$ as
\begin{align}
  \vmu^{j\to i} \propto &\exp(\log\EML\vgamma  + \a_j+\sum_{s\in\nodes_j\quot i}\log\EML\vPi\vmu^{s\to j} - \vlambda_j),\label{eq:marginal-message}\\
  \lambda_{jk}=&\log\E_{\quot j}[\exp({\tilde R_1(\bar\z)+\tilde R_2(\overline{\z\z^\T})})|z_{jk}=1],
\label{eq:expected-R}
\end{align}
where $\E_{\quot j}[f(\Z)]=\sum_{\Z_{\quot
    j}}f(\Z)\prod_{s\not=j}\mu^{s\to j}(\z_j)$ denotes the expectation
by the joint message except node $j$.

Because the log-expectation-exponential in $\vlambda_j$ is
intractable, we need to approximate it. The key fact is that each
message is normalized, and $\prod_{s\not=j}\mu^{s\to j}(\z_j)$ can be
seen as the probabilities of $\{\z_s\}_{s\not= j}$. By using this, we
obtain that $\E_{\quot j}[\exp(\tilde R_1(\bar\z))|z_{jk}=1] \simeq
\exp(\tilde R_1(\E_{\quot j}[\bar\z|z_{jk}=1]))$, which is written as
\begin{align*}
  &
  \frac{1}{2}\log(\E_{\quot j}[\bar z_k|z_{jk}=1]+\frac{1}{N})
  +\frac{1}{2}\sum_{l\not=k}\log(\E_{\quot j}[\bar
  z_l|z_{jl}=0]+\frac{1}{N})
  \\
  &= \frac{1}{2} \log\frac{N\E_{\quot j}[\bar z_k|z_{jk}=1] +
    1 }{N\E_{\quot j}[\bar z_k|z_{jk}=0]+ 1} + b.
\end{align*}
Note that $b=\frac{1}{2}\sum_{l=1}^K\log(\E_{\quot j}[\bar z_l|z_{jl}=0]+\frac{1}{N})$ does not depend on $k$ so that we ignore it
as a constant.
Also, in a sparse graph, recall that $\mu^{s\to j}(\z_s) \approx
\E_q[\z_s]$ for $j\notin\nodes_s$
(Section~\ref{sec:infer-sparse-graph}). Therefore,
\begin{align*}
  \E_{\quot j}[\bar \z]
  =& \frac{\E_{\quot j}[\sum_{s\not=j}\z_s] + \z_j}{N}
  \approx \frac{N\EML\vgamma - \E_q[\z_j] + \z_j}{N}.
\end{align*}
Similar approximation can be used for $\E_{\quot
  j}[R_2(\bar\z,\overline{\z\z^\T})]$.  

Back-substituting these into \eqref{eq:expected-R}, we obtain an
approximate BP message as
\begin{align}\label{eq:marginal-approx-message}
  \vmu^{j\to i} \propto &\exp(\log\EML\vgamma  + \tilde\a_j+\sum_{s\in\nodes_j\quot i}\log\EML\vPi\vmu^{s\to j} - \tilde\vlambda_j).
\end{align}
Here, $\tilde\vlambda_j$ corresponds to the $\FFIC$ penalty terms
defined as
\begin{subequations}
\begin{align}
\tilde\lambda_{jk}=
&\frac{1}{2}\log\frac{[\t_{\quot j}]_k + 1}{[\t_{\quot j}]_k}\label{eq:lbp-r1}
\\
&+\frac{1}{2}\sum_l
  \log\frac{[\vT_{\quot j}]_{kl}+\sum_{s\in\nodes_j}\E_q[z_{sl}]}{[\vT_{\quot j}]_{kl}},
\label{eq:lbp-r2}
\end{align}
\end{subequations}
where $\t_{\quot j}=N\EML\vgamma_k - \E_q[\z_{j}]+\1$ and $\vT_{\quot
  j}=N^2\EML\vgamma\EML\vgamma^\T -
\E_q[\z_{j}]\E_q[\sum_{s\in\nodes_j}\z_{s}]^\T+\1\1^\T$. These
expectations can be computed in the same way as \eqref{eq:moments}.
%

We refer to the inference algorithm using this messages as
\textit{FABBP} (Algorithm~\ref{alg:lbp}). Thanks to the approximation
for sparse graphs, the time complexity of FABBP is $O(MK^2)$, as in
the original BP. In accordance with $\FFIC$, we refer to the
alternating update of $q(\Z)$ and $\{\EML\vTheta,\EML\veta\}$ as the
\emph{F$^2$AB algorithm} (Algorithm~\ref{alg:f2ab}).
\begin{algorithm}[tb]
\caption{\texttt{FABBP}($\vgamma$,$\vPi$)}\label{alg:lbp}
\begin{algorithmic}
  \Repeat
    \For{ randomly choosing $(i,j)\in\edges$}
      \State Update $\E\z_i$ and 
      $\vmu^{i\to j}$ by \eqref{eq:marginal-approx-message}
      \State $\vdelta^i =\E\z_i^{\new} - \E\z_i^{\old}$,~~$\vdelta^{i\to j} =\vmu^{i\to j}_{\new} - \vmu^{i\to j}_{\old}$
      \State $\h\gets \h + \vdelta^i$  
      \State $\E\bar\z \gets\E\bar\z +  \vdelta^i/N$
      \State $\E\overline{\z\z^\T} \gets \E\overline{\z\z^\T} + \vPi * \vdelta^{i\to j}(\vmu^{j\to i})^{\T}/N^2$
      \If{$\E\bar z_k<0.1/N$ for $k=1,\dots,K$}
      \State Remove cluster $k$ and $K\gets K - 1$
      \EndIf
    \EndFor
  \Until{$\sum_{(i,j)\in\edges}|\vdelta^{i\to j}|/M < 10^{-2}$ }
  \State\Return $q=\{\E\bar\z,\E\overline{\z\z^\T}\}$
\end{algorithmic}
\end{algorithm}
\begin{algorithm}[tb]
\caption{The F$^2$AB algorithm of the SBM.}\label{alg:f2ab}
\begin{algorithmic}
\State Initialize $K=K_{\max}$ and $\vmu^{i\to j}$ for $(i,j)\in\edges$ randomly
\State Initialize $\vPi$ by the spectral method~\cite{rohe11}
\Repeat
  \State $q\gets$ \texttt{FABBP}($\vgamma,\vPi$)
\State $\vgamma\gets\ML{\vgamma}(q)$~~and~~$\vPi\gets\ML{\vPi}(q)$
\Until{$\max_{kl}|\pi_{kl}^{\old}-\pi_{kl}^{\new}| < 10^{-8}$}
\end{algorithmic}
\end{algorithm}

\subsection{Penalization Effect of $R_1$ and $R_2$}\label{sec:regularizers-effect}

In $\FFIC$~\eqref{eq:F2IC}, the marginalization with respect to
$\vTheta$ and $\veta$ induces additional terms $R_1$ and $R_2$ via
Laplace's method. Their effects are inherited in FABBP as
$\tilde\vlambda$, which does not exist in the original BP message
\eqref{eq:message}. In fact, $\tilde\vlambda$ diminishes the size of
redundant clusters. For example, consider the effect of $R_1$, which
appears as \eqref{eq:lbp-r1}. When $\EML\vgamma_k=\Omega(1)$,
$\t_{\quot j}\approx N\EML\vgamma_k$. This simplifies
\eqref{eq:lbp-r1} to
\begin{align}\label{eq:simple-r1}
  \frac{1}{2}\log\frac{\t_{\quot j}+1}{[\t_{\quot j}]_k}
  \simeq 
  \frac{1}{2}\log\left(1 + \frac{1}{N\EML\gamma_k}\right).
\end{align}
This suggests that, if $\Omega(N)$ nodes are assigned to cluster $k$,
$1/N\EML\gamma_k\to 0$ and \eqref{eq:simple-r1} goes to zero, i.e.,
$R_1$ penalizes nothing. In contrast, if cluster $k$ has only a few
nodes, $1/N\EML\gamma_k$ remains a constant, and $R_1$ reduces the
message proportion of cluster $k$ (Figure~\ref{fig:R1}).
\begin{figure}[tb]
  \centering
  \includegraphics[width=1\linewidth]{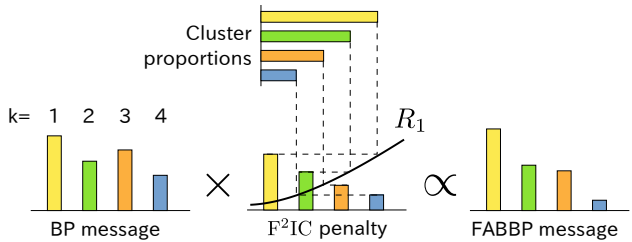}
  \caption{Penalization effect of $R_1$.}
  \label{fig:R1}
\end{figure}

Remarkably, $R_2$ has a different penalization effect that complements
that of $R_1$. As in \eqref{eq:simple-r1}, \eqref{eq:lbp-r2} can be
approximated as
\begin{align}
\frac{1}{2}\sum_l
  \log\left(1 + \frac{\sum_{s\in\nodes_j}\E_q[z_{sl}]}{N^2\EML\gamma_k\EML\gamma_l}\right).
\end{align}
Unlike the case of $R_1$, two cluster sizes $\EML\gamma_k$ and
$\EML\gamma_l$ appear in the denominator together. Because the product
$\EML\gamma_k\EML\gamma_l$ represents the proportion of bicluster
$kl$, $R_2$ penalizes each cluster if it has many small
(low-proportional) biclusters. Thus, $R_2$ evaluates the redundancy of
clusters in a more fine-grained way than $R_1$ does---the $R_1$
penalty depends on cluster proportions, whereas the $R_2$ penalty
depends on bicluster proportions.

These penalization affect all the BP messages, and redundant clusters
disappear in the FABBP iterations.  For this reason, it is not
necessary to compute the $\FFIC$ lower bound for model selection; if
the initial model $K_{\max}$ is sufficiently large, the FABBP
algorithm will automatically determine an adequate $K$ .


\section{Related Work}\label{sec:related-work}

\paragraph{Bayesian Methods}

\citet{Nowicki2001} employed a Monte Carlo method for Bayesian
inference. Although it is accurate, their method cannot handle graphs
larger than a few hundred nodes. To deal with large graphs, the VB
method using the EM algorithm is often
used~\citep{newman07,hofman08,daudin08,latouche12}.  One of the
standard approaches is to update the latent variables and
model parameters iteratively using the uninformative
priors~\citep{hofman08,latouche12,mariadassou2010}.
An alternative approach is to use BP for the cluster assignment
inference~\citep{hastings06,decelle11}.

%


Bayesian nonparametric methods provide an alternative way of
determining $K$~\citep{antoniak74,griffiths11}.  \citet{kemp06}
proposed the infinite relational model (IRM), which extends the SBM to
handle an infinite number of clusters.  In a way similar to FAB, $K$
is automatically determined during the learning process.  However,
\citet{miller13} proved that the Dirichlet process mixtures
(DPMs)---the Bayesian nonparametric extension of mixture
models---overestimate $K$.  Because the IRM is
closely related to the DPM, the IRM may be inconsistent.

\paragraph{Model Selection}\label{sec:conn-other-meth}


In partial Bayesian methods, a few information criteria have
been proposed.
Peixoto~\citeyearpar{Peixoto2012,Peixoto2013} used a criterion based
on the minimum description length principle.  \citet{decelle11}
proposed a BP-based framework that determines $K$ from the
Bethe free energy.

Next, for comparison with $\FFIC$, we introduce four fully Bayesian
information criteria.

\citet{daudin08} adapted the \emph{integrated classification
  likelihood (ICL)} criterion~\citep{biernacki00} to the SBM, defined
as
\begin{align}\label{eq:icl}
  \ICL = \log p(\X,\E_{\tilde p}\Z| \ML{\vTheta}(\tilde p), \ML{\veta}(\tilde p)) - \tilde\ell_N,
\end{align}
where $\tilde\ell_N$ is as defined in
Section~\ref{sec:lowerbound-ffic}. There are three main differences
with $\FFIC$: ICL 1) uses $\tilde p$ instead of $p^*$ and does not
have 2) the entropy $H$ and 3) the penalties $R_1,R_2,$ and $C$. 1)
and 2) are reasonable for a dense graph because, as discussed in
Section~\ref{sec:infer-sparse-graph}, $\tilde p$ converges to a point
estimator, which means that $p^*\to\tilde p$ and $H(\tilde p)\to 0$ at
$N\to\infty$. Also, 3) can be ignorable as a constant and hence ICL is
consistent asymptotically equivalent to $\log p(\X|K)$, if the
following strong condition holds: \assum{low-empty-prob} the
probability that the posterior generates empty (bi)clusters is
zero.\footnote{\ref{asm:low-empty-prob} is a strong version of
  \ref{asm:low-empty-prob-weak}.}  In contrast, when a graph is
sparse, $H(\tilde p)=O(N)$ and the consistency no longer holds.

\citet{latouche12} proposed a non-asymptotic counterpart of ICL that
replaces the marginal likelihood with its VB lower bound. However, the
error caused by the mean-field approximation is not asymptotically
negligible and consistency does not hold.

\citet{fujimaki12a} proposed the original formulation of FIC for
mixture models. Because the SBM is a mixture model, the FIC can be
imported into the SBM. This is defined as
\begin{align}\label{eq:FIC}
  \FIC =& \E_{p^*}[\log p(\X,\Z\mid\hat\vTheta,\hat\veta) -
  \frac{K(K+1)}{2}R_1(\bar\z) ] \notag
\\
&- \tilde\ell_N + H(p^*).
\end{align}
FIC is similar to $\FFIC$ in having the penalty term $R_1$. This
eliminates unnecessary clusters, in the same way as $\FFIC$, during
posterior inference.
However, FIC ignores the Hessian term in Laplace's method, which omits
$R_2$ from the formulation.  This makes the approximation error larger
and the regularization effect weaker than $\FFIC$ (we confirm this
empirically in the next section.)  Crucially, FIC does not take into
account the case of empty (bi)clusters in Laplace's method. Although,
like ICL, this is justifiable when \ref{asm:low-empty-prob} is
satisfied, consistency does not hold for sparse graphs. Finally, FIC
is computed by VB-based optimization; BP inference like FABBP
(Algorithm~\ref{alg:f2ab}) has not been addressed.

For ICL~\eqref{eq:icl}, if we add the entropy and move the expectation
to the outside, the criterion corresponds to the simplified version of
FIC called \emph{BICEM}~\citep{hayashi15}. We refer to this as
\emph{corrected ICL (cICL)}:
\begin{align}\label{eq:cICL}
\cICL = \E_{\tilde p}[\log p(\X,\Z| \ML{\vTheta}(\tilde p), \ML{\veta}(\tilde p))] - \tilde\ell_N + H(\tilde p).
\end{align}
Under \ref{asm:low-empty-prob}, cICL is asymptotically equivalent to
the full marginal for both dense and sparse graphs.  Nevertheless,
cICL essentially differs from $\FFIC$ in that cICL uses the
unmarginalized posterior~\eqref{eq:posterior}. Therefore, cICL does
not have an automatic model selection mechanism and the outer loop for
all model candidates is needed.

Table~\ref{tab:ic-comp} compares the above methods with $\FFIC$. It
can be seen that $\FFIC$ is the most accurate method and the
only consistent criterion for sparse graphs without any special
conditions like \ref{asm:low-empty-prob}.

Finally, we discuss a few studies that have addressed the scalability
issue of model selection. \citet{yang15} proposed a simultaneous
framework of inference and model selection by simplifying the
parameterization of the SBM. \citet{liu15} developed an FAB framework
for the factorial graph model that assumes a low-rank structure in
edge probabilities while allowing cluster overlapping. However, their
models are essentially different, and their approaches are not
applicable to the SBM.


\begin{table}[tb]
  \caption{Summary of fully Bayesian model selection on SBM. ``Accuracy'' shows asymptotic error against $p(\X)$. ``One-Pass'' indicates whether model selection is one pass or not.
    Note that ICL is consistent but its asymptotic rate is unknown.}
\label{tab:comp-ms}\label{tab:ic-comp}
\centering
{\small
\begin{tabular}{lccc}
  \hline
  Methods& \multicolumn{2}{l}{Accuracy (with/without \ref{asm:low-empty-prob})}  &One- \\ 
  & Dense& Sparse  &Pass \\ 
  \hline
  ICL~\eqref{eq:icl}   & Consistent/-- & $O(N)$/--   &  \\ 
  cICL~\eqref{eq:cICL}   &$O(1)$/--&$O(1)$/--   &  \\ 
  VB~(\citeauthor{latouche12})   &--/--&--/--    & \\ 
  FIC~\eqref{eq:FIC}     &$O(1)$/-- &$O(1)$/-- & \checkmark\\ 
  $\FFIC$~\eqref{eq:F2IC} &$O({1 \over N})$/$O(1)$ &$O({1 \over N})$/$O(1)$  & \checkmark\\ 
  \hline
\end{tabular}
}
\end{table}


\begin{figure}[tb]
  \centering
  \includegraphics[width=1\linewidth]{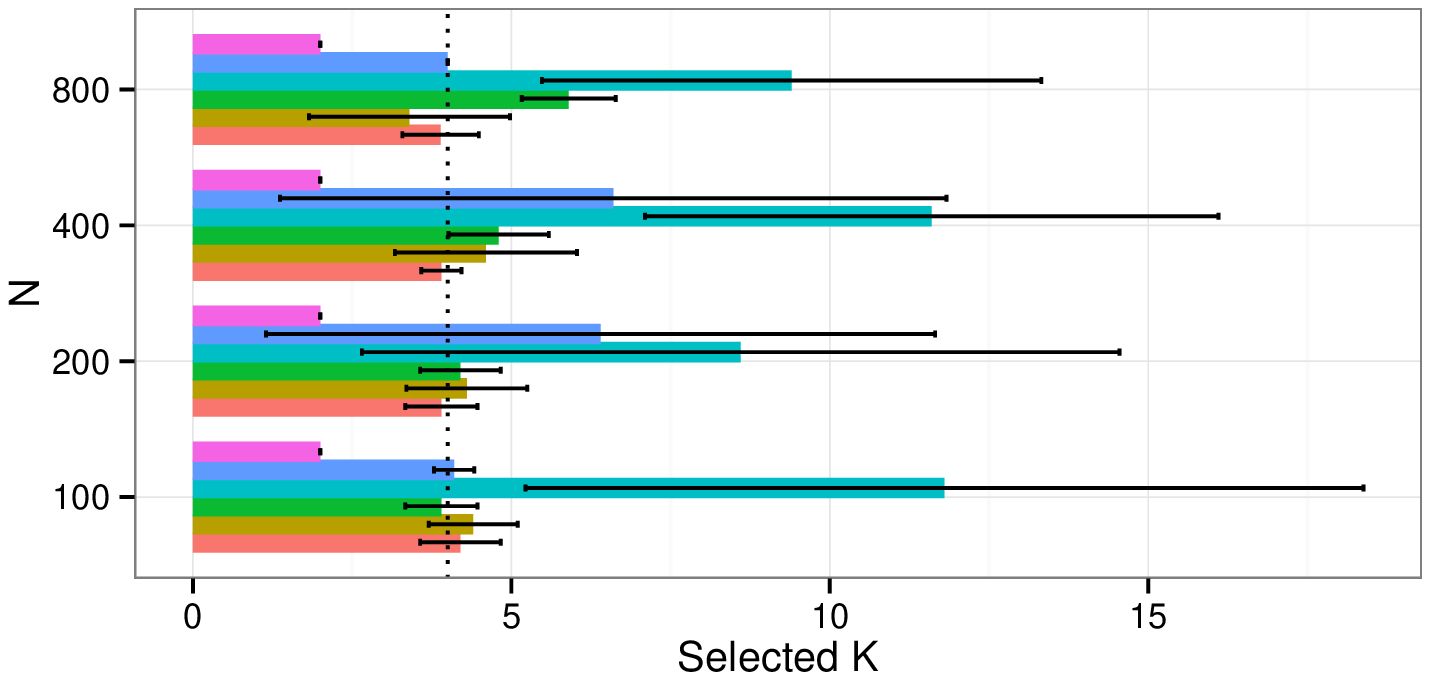}
  \includegraphics[width=1\linewidth]{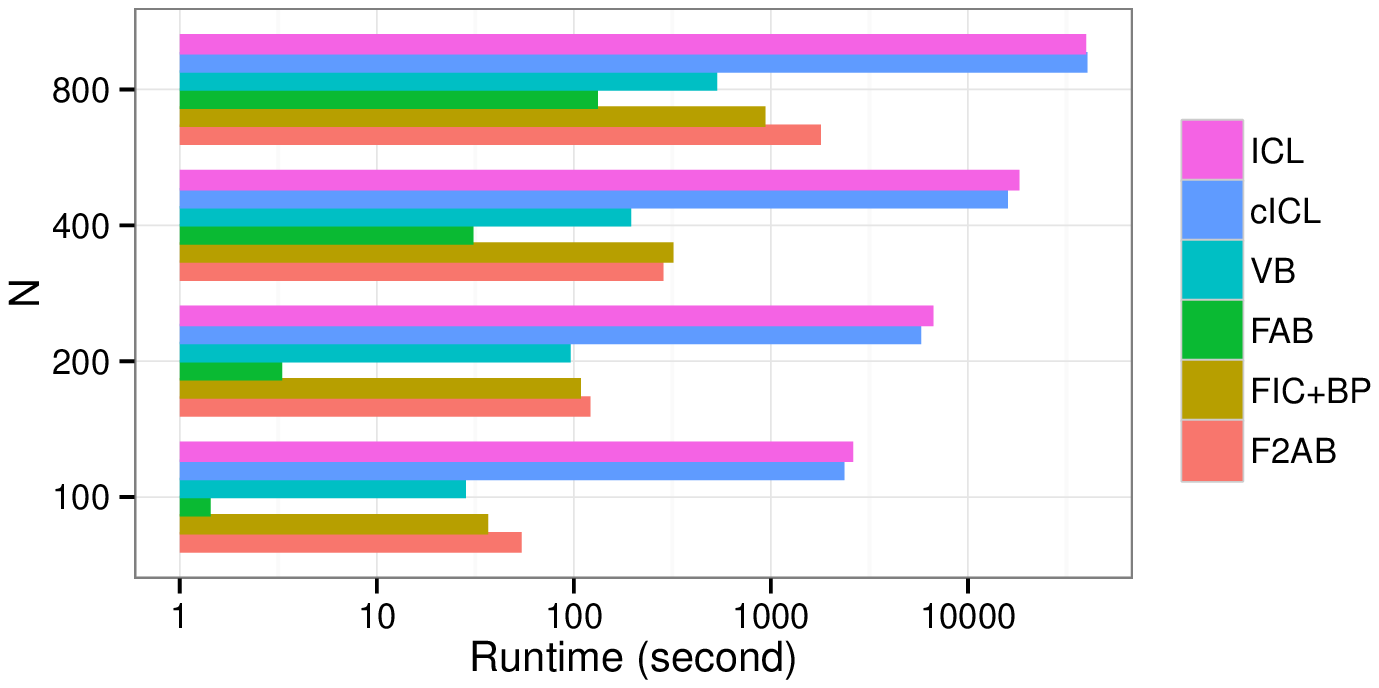}
  \caption{Synthetic data experiment: results. (Top) Means and
    standard deviations of selected number of clusters with $10$
    different random seeds. (Bottom) Means runtimes.}
  \label{fig:ex-toy}
\end{figure}

\begin{table}[ht]
\caption{Real network experiment: data.}
\label{tab:data}
\centering
{\small
\begin{tabular}{lrr}
  \hline
Data & $N$ & $M$ \\ 
  \hline
football~\citep{girvan02}     & 114 & 1224 \\ 
  euroroad~\citep{subelj11}   & 1174 & 2834 \\ 
  propro~\citep{jeong01}      & 1868 & 4406 \\ 
  netscience~\citep{newman06} & 1460 & 5484 \\ 
  email~\citep{guimera03}     & 1133 & 10902 \\ 
  names~\citep{moreno_names}  & 1773 & 18262 \\ 
  uniq~\citep{petster-friendships-hamster}  & 1858 & 25068 \\ 
  usairport~\citep{opsahl-usairport}  & 1574 & 34430 \\ 
   \hline
\end{tabular}
}
\end{table}

\begin{table*}[ht]
  \caption{Real network experiment: results. Means (and standard deviations) of selected 
    number of clusters and testing errors with $5$ different random seeds. 
    ``nNPLL'' indicates negative NPLL (smaller is better.)
    Results that were significantly better in one-sided $t$-test
    with 95\% confidence are indicated by bold font.}
\label{tab:ex-real}
\centering
{\small          
\setlength{\tabcolsep}{3pt}
\begin{tabular}{l|rrrrrrrrr}
  \hline
 & ~ & email & euroroad & football & names & netscience & propro & uniq & usairport \\ 
  \hline
\multirow{6}{*}{\rotatebox{90}{Selected $K$}}  & cICL & N/A & N/A & $7$ & N/A & N/A & N/A & N/A & N/A \\ 
   & VB & $24.8$ & $5.1$ & $13$ & $40.9$ & $21.33$ & $28.8$ & $39.1$ & $34.6$ \\ 
   & IRM & $319.99$ & $220.02$ & $27.77$ & $406.18$ & $340.95$ & $353.69$ & $497.07$ & $154.11$ \\ 
   & FAB & $13.1$ & $7.8$ & $10.9$ & $12.5$ & $16.5$ & $3.5$ & $16$ & $13.2$ \\ 
   & FIC+BP & $25.11$ & $11.44$ & $11.3$ & $21.5$ & $18.5$ & $10$ & $16.86$ & $1.88$ \\ 
   & \ttFFAB & $\mathbf{7.25}$ & $4.43$ & $\mathbf{5.75}$ & $\mathbf{6}$ & $\mathbf{6.62}$ & $\mathbf{2.62}$ & $\mathbf{8.71}$ & $2$ \\ 
  \hline
\multirow{6}{*}{\rotatebox{90}{nNPLL$\times 10^2$}} & cICL & N/A & N/A & $14.49\pm 0.75$ & N/A & N/A & N/A & N/A & N/A \\ 
   & VB & $7.28\pm 0.58$ & $4.27\pm 1.15$ & $30.57\pm 0.79$ & $6.89\pm 0.66$ & $3.96\pm 0.67$ & $5.69\pm 11.52$ & $10.97\pm 0.98$ & $17\pm 2.03$ \\ 
   & IRM & $6.93\pm 0.04$ & $2.4\pm 0.02$ & $11.3\pm 0.15$ & $5.28\pm 0.02$ & $3.65\pm 0.02$ & $2.4\pm 0.02$ & $5.82\pm 0.02$ & $\mathbf{4.81\pm 0.01}$ \\ 
   & FAB & $9.61\pm 0.55$ & $1.75\pm 0.13$ & $26.47\pm 0.53$ & $13.75\pm 1.78$ & $9.62\pm 1.67$ & $1.67\pm 0.14$ & $11.28\pm 0.41$ & $27.46\pm 2.04$ \\ 
   & FIC+BP & $3.84\pm 0.15$ & $1.65\pm 0.05$ & $9.97\pm 0.3$ & $3.16\pm 1.58$ & $\mathbf{1.09\pm 0.06}$ & $1.13\pm 0.03$ & $4.97\pm 3.39$ & $52.69\pm 15.14$ \\ 
   & \ttFFAB & $3.85\pm 0.11$ & $\mathbf{0.95\pm 0.03}$ & $\mathbf{8.99\pm 1.01}$ & $3.18\pm 0.09$ & $1.21\pm 0.11$ & $1.03\pm 0.33$ & $3.43\pm 0.11$ & $67.02\pm 12.65$ \\ 
   \hline
\end{tabular}

}
\end{table*}

\section{Experiments}\label{sec:experiments}

Following six methods were used in experiments: \texttt{ICL} and
\texttt{cICL} with BP inference, \texttt{VB} and \texttt{FAB} with the
mean-field approximation, \texttt{FIC+BP}, and
\texttt{\ttFFAB}. \texttt{FIC+BP} was the method whose objective is the
original FIC~\eqref{eq:FIC} yet the inference was done by FABBP. All
of these were implemented in Python and $\E\Z$ and
$\vPi$ were initialized using the spectral method~\citep{rohe11}.  The
model candidates of \texttt{ICL}, \texttt{cICL}, and \texttt{VB} were set to
$\{1,\dots,K_{\max}\}$.  All the hyperparameters of \texttt{VB} were
set to $1/2$ as suggested by \citet{latouche12}.

\paragraph{Synthetic Data}
First, we investigated whether the selected number of clusters $K$
coincided with one planted using synthetic data. We set $K=4$ as the
planted value, and true $\vPi$ as $\pi_{kl}=1/N$ for $k\not=l$ and
$\pi_{kk}=20/N$ as a sparse graph. We then generated
data with $\vgamma=\frac{1}{4}(1,1,1,1)$, i.e. all of the
clusters were the same size. We set $K_{\max}=20$.
%
The results (Figure~\ref{fig:ex-toy}) show clearly that \texttt{FIC+BP} and
\texttt{\ttFFAB} outperformed the other methods.  
ICL consistently underestimated $K$, as noted in
Section~\ref{sec:conn-other-meth}.
The performances of \texttt{cICL} and \texttt{VB} were unstable; they
detected $K$ correctly in most cases but produced a few very
inaccurate estimations. While the performances of \texttt{FIC+BP} and
\texttt{\ttFFAB} were similar, \texttt{\ttFFAB} provided more accurate
and stable detection, especially when $N$ was small.

\paragraph{Real Networks}
We also investigated the performance using eight real
networks~(Table~\ref{tab:data}).
Instead of \texttt{ICL}, we added the \texttt{IRM} with collapsed Gibbs
sampling~\citep{liu94}. For \texttt{IRM}, we used the same
hyperparameter setting as \texttt{VB}.  We set
$K_{\max}=\max\{20,\sqrt{N}\}$.  To measure the generalization error,
we randomly masked 1\% of all edges as missing and these were not
used in the training (during the training, these missings were imputed by
each algorithm.) After model selection, we evaluated the
normalized predictive log-likelihood (NPLL), which is the PLL divided
by $N(N+1)/2$, for those missing values.
The results in Table~\ref{tab:ex-real} show that \texttt{cICL} exceeded 48
hours for most data sets, whose results are not shown.
In terms of prediction, \texttt{FIC+BP} and \texttt{\ttFFAB} were
significantly better than the others in five data sets. In addition,
\texttt{\ttFFAB} selected the smallest $K$ for all except
``usaport'' data set. 

\paragraph{Discussion}

In the real data experiment, the difference among \texttt{FAB},
\texttt{FIC+BP}, and \texttt{\ttFFAB} highlights the significance of
our contributions in FABBP and $\FFIC$. As shown, \texttt{FIC+BP}
outperformed \texttt{FAB} for the seven data sets in
prediction. Because their objective function was the same, the
outperformance was attributed to the BP inference. Also, yet the
prediction performance was equivalent, \texttt{\ttFFAB} selected 2--4
times smaller $K$ than \texttt{FIC+BP}. In this case, because the
inference methods were the same, the difference had to come from the
difference of the objective functions, or more specifically, the
penalty term $R_2$. This supports the distinctiveness of $R_2$
discussed in Section~\ref{sec:regularizers-effect}.

Selecting a parsimonious model is a fascinating nature of our approach
that fits the principle of Occam's Razor.  If $K$ is too small
(e.g. $K=1$), the model cannot describe data well, and the
generalization error will be increased. In contrast, if $K$ is too
large (e.g. $K=O(100)$), the generalization error can be small but
interpreting its result is difficult.
As shown in the above experiments, our method resolved this trade-off
in the most successful way. Indeed, \texttt{\ttFFAB} achieved the best
prediction performance with the smallest $K$ in most of the real data
sets.

A major theoretical limitation of the $\FFAB$ algorithm is the lack of
consistency.  Although $\FFIC$ is consistent, the $\FFAB$ algorithm
does not have such guarantee due to the use of BP and the lower bound.
Nevertheless, the $\FFAB$ algorithm empirically selected better models
than the other methods. This is plausibly because of the following two
reasons.
First, because the number of message loops is smaller in a sparse
graph, FABBP might closely converge to the true marginal posterior.
Second, the $\FFAB$ algorithm started the inference from $K_{\max}$,
which was usually very large. This might expand a possible search space
and avoid getting stuck in local maxima.

%



\subsection*{Acknowledgments}
KH was supported by MEXT KAKENHI 15K16055. TK was supported by JSPS KAKENHI 26011023.

\bibliography{fabsbm}

\begin{thebibliography}{37}
\providecommand{\natexlab}[1]{#1}
\providecommand{\url}[1]{\texttt{#1}}
\expandafter\ifx\csname urlstyle\endcsname\relax
  \providecommand{\doi}[1]{doi: #1}\else
  \providecommand{\doi}{doi: \begingroup \urlstyle{rm}\Url}\fi

\bibitem[Antoniak(1974)]{antoniak74}
Antoniak, Charles~E.
\newblock Mixtures of dirichlet processes with applications to bayesian
  nonparametric problems.
\newblock \emph{Annals of Statistics}, 2\penalty0 (6), 1974.

\bibitem[Biernacki et~al.(2000)Biernacki, Celeux, and Govaert]{biernacki00}
Biernacki, Christophe, Celeux, Gilles, and Govaert, G{\'e}rard.
\newblock Assessing a mixture model for clustering with the integrated
  completed likelihood.
\newblock \emph{IEEE Trans. Pattern Anal. Mach. Intell.}, 22\penalty0
  (7):\penalty0 719--725, 2000.

\bibitem[Celisse et~al.(2012)Celisse, Daudin, and Pierre]{celisse12}
Celisse, Alain, Daudin, Jean-Jacques, and Pierre, Laurent.
\newblock Consistency of maximum-likelihood and variational estimators in the
  stochastic block model.
\newblock \emph{Electron. J. Statist.}, 6:\penalty0 1847--1899, 2012.

\bibitem[Daudin et~al.(2008)Daudin, Picard, and Robin]{daudin08}
Daudin, J.~J., Picard, F., and Robin, S.
\newblock A mixture model for random graphs.
\newblock \emph{Statistics and Computing}, 18\penalty0 (2):\penalty0 173--183,
  2008.

\bibitem[Decelle et~al.(2011)Decelle, Krzakala, Moore, and
  Zdeborov\'a]{decelle11}
Decelle, Aurelien, Krzakala, Florent, Moore, Cristopher, and Zdeborov\'a,
  Lenka.
\newblock Asymptotic analysis of the stochastic block model for modular
  networks and its algorithmic applications.
\newblock \emph{Phys. Rev. E}, 84:\penalty0 066106, 2011.

\bibitem[Fujimaki \& Morinaga(2012)Fujimaki and Morinaga]{fujimaki12a}
Fujimaki, Ryohei and Morinaga, Satoshi.
\newblock Factorized asymptotic bayesian inference for mixture modeling.
\newblock In \emph{AISTATS}, 2012.

\bibitem[Girvan \& Newman(2002)Girvan and Newman]{girvan02}
Girvan, M. and Newman, M. E.~J.
\newblock Community structure in social and biological networks.
\newblock \emph{Proceedings of the National Academy of Sciences}, 99\penalty0
  (12):\penalty0 7821--7826, 2002.

\bibitem[Griffiths \& Ghahramani(2011)Griffiths and Ghahramani]{griffiths11}
Griffiths, Thomas~L. and Ghahramani, Zoubin.
\newblock The {I}ndian buffet process: An introduction and review.
\newblock \emph{Journal of Machine Learning Research}, 12:\penalty0
  1185−1224, 2011.

\bibitem[Guimerà et~al.(2003)Guimerà, Danon, Díaz-Guilera, Giralt, and
  Arenas]{guimera03}
Guimerà, Roger, Danon, Leon, Díaz-Guilera, Albert, Giralt, Francesc, and
  Arenas, Alex.
\newblock Self-similar community structure in a network of human interactions.
\newblock \emph{Phys. Rev. E}, 68\penalty0 (6):\penalty0 065103, 2003.

\bibitem[Hastings(2006)]{hastings06}
Hastings, M.~B.
\newblock Community detection as an inference problem.
\newblock \emph{Phys. Rev. E}, 74:\penalty0 035102, 2006.

\bibitem[Hayashi et~al.(2015)Hayashi, Maeda, and Fujimaki]{hayashi15}
Hayashi, Kohei, Maeda, Shin{-}ishi, and Fujimaki, Ryohei.
\newblock Rebuilding factorized information criterion: Asymptotically accurate
  marginal likelihood.
\newblock In \emph{International Conference on Machine Learning (ICML)}, 2015.

\bibitem[Hofman \& Wiggins(2008)Hofman and Wiggins]{hofman08}
Hofman, Jake~M. and Wiggins, Chris~H.
\newblock Bayesian approach to network modularity.
\newblock \emph{Physical Review Letters}, 100\penalty0 (25):\penalty0 258701,
  2008.

\bibitem[Jeong et~al.(2001)Jeong, Mason, Barabasi, and Oltvai]{jeong01}
Jeong, H., Mason, S.P., Barabasi, A.L., and Oltvai, Z.N.
\newblock Lethality and centrality in protein networks.
\newblock \emph{Nature}, 411:\penalty0 41--42, 2001.

\bibitem[Kawamoto \& Kabashima(2015)Kawamoto and
  Kabashima]{Kawamoto2015Laplacian}
Kawamoto, Tatsuro and Kabashima, Yoshiyuki.
\newblock Limitations in the spectral method for graph partitioning:
  Detectability threshold and localization of eigenvectors.
\newblock \emph{Phys. Rev. E}, 91:\penalty0 062803, Jun 2015.
\newblock \doi{10.1103/PhysRevE.91.062803}.

\bibitem[Kemp et~al.(2006)Kemp, Tenenbaum, Griffiths, Yamada, and Ueda]{kemp06}
Kemp, Charles, Tenenbaum, Joshua~B., Griffiths, Thomas~L., Yamada, Takeshi, and
  Ueda, Naonori.
\newblock Learning systems of concepts with an infinite relational model.
\newblock In \emph{Proceedings of the 21st National Conference on Artificial
  Intelligence - Volume 1}, AAAI'06, 2006.

\bibitem[Konect(2015{\natexlab{a}})]{moreno_names}
Konect.
\newblock King james network dataset -- {KONECT}, 2015{\natexlab{a}}.

\bibitem[Konect(2015{\natexlab{b}})]{opsahl-usairport}
Konect.
\newblock Us airports network dataset -- {KONECT}, 2015{\natexlab{b}}.

\bibitem[Konect(2015{\natexlab{c}})]{petster-friendships-hamster}
Konect.
\newblock Hamsterster friendships network dataset -- {KONECT},
  2015{\natexlab{c}}.

\bibitem[Krzakala et~al.(2013)Krzakala, Moore, Mossel, Neeman, Sly,
  Zdeborov\'{a}, and Zhang]{Krzakala2013}
Krzakala, Florent, Moore, Cristopher, Mossel, Elchanan, Neeman, Joe, Sly,
  Allan, Zdeborov\'{a}, Lenka, and Zhang, Pan.
\newblock {Spectral redemption in clustering sparse networks.}
\newblock \emph{Proc. Natl. Acad. Sci. U.S.A.}, 110\penalty0 (52):\penalty0
  20935--40, December 2013.
\newblock \doi{10.1073/pnas.1312486110}.

\bibitem[Latouche et~al.(2012)Latouche, Birmelé, and Ambroise]{latouche12}
Latouche, Pierre, Birmelé, Etienne, and Ambroise, Christophe.
\newblock Variational bayesian inference and complexity control for stochastic
  block models.
\newblock \emph{Statistical Modelling, SAGE Publications}, 12\penalty0
  (1):\penalty0 93--115, 2012.

\bibitem[Leger et~al.(2014)Leger, Vacher, and Daudin]{Leger2013}
Leger, Jean-Benoist, Vacher, Corinne, and Daudin, Jean-Jacques.
\newblock Detection of structurally homogeneous subsets in graphs.
\newblock \emph{Statistics and Computing}, 24\penalty0 (5):\penalty0 675--692,
  2014.

\bibitem[Liu et~al.(2015)Liu, Feng, Fujimaki, and Muraoka]{liu15}
Liu, Chunchen, Feng, Lu, Fujimaki, Ryohei, and Muraoka, Yusuke.
\newblock Scalable model selection for large-scale factorial relational models.
\newblock In \emph{International Conference on Machine Learning (ICML)}, 2015.

\bibitem[Liu(1994)]{liu94}
Liu, Jun~S.
\newblock The collapsed {G}ibbs sampler in {B}ayesian computations with
  application to a gene regulation problem.
\newblock \emph{Journal of the American Statistical Association}, 89\penalty0
  (427):\penalty0 958--966, 1994.

\bibitem[Mariadassou et~al.(2010)Mariadassou, Robin, and
  Vacher]{mariadassou2010}
Mariadassou, Mahendra, Robin, Stéphane, and Vacher, Corinne.
\newblock Uncovering latent structure in valued graphs: A variational approach.
\newblock \emph{Ann. Appl. Stat.}, 4\penalty0 (2):\penalty0 715--742, 2010.

\bibitem[Miller \& Harrison(2013)Miller and Harrison]{miller13}
Miller, Jeffrey~W and Harrison, Matthew~T.
\newblock A simple example of dirichlet process mixture inconsistency for the
  number of components.
\newblock In \emph{Advances in Neural Information Processing Systems 26}, 2013.

\bibitem[Newman(2006)]{newman06}
Newman, M. E.~J.
\newblock Finding community structure in networks using the eigenvectors of
  matrices.
\newblock \emph{Phys. Rev. E}, 74:\penalty0 036104, 2006.

\bibitem[Newman \& Leicht(2007)Newman and Leicht]{newman07}
Newman, MEJ and Leicht, EA.
\newblock Mixture models and exploratory analysis in networks.
\newblock \emph{Proceedings of the National Academy of Sciences}, 104\penalty0
  (23):\penalty0 9564, 2007.

\bibitem[Nowicki \& Snijders(2001)Nowicki and Snijders]{Nowicki2001}
Nowicki, Krzysztof and Snijders, Tom A.~B.
\newblock Estimation and prediction for stochastic blockstructures.
\newblock \emph{Journal of the American Statistical Association}, 96\penalty0
  (455):\penalty0 1077--1087, 2001.

\bibitem[Peixoto(2012)]{Peixoto2012}
Peixoto, Tiago~P.
\newblock {Entropy of stochastic blockmodel ensembles}.
\newblock \emph{Phys. Rev. E}, 85\penalty0 (5):\penalty0 056122, 2012.

\bibitem[Peixoto(2013)]{Peixoto2013}
Peixoto, Tiago~P.
\newblock Parsimonious module inference in large networks.
\newblock \emph{Phys. Rev. Lett.}, 110:\penalty0 148701, 2013.

\bibitem[Peixoto(2015)]{Peixoto2015}
Peixoto, Tiago~P.
\newblock Model selection and hypothesis testing for large-scale network models
  with overlapping groups.
\newblock \emph{Phys. Rev. X}, 5:\penalty0 011033, 2015.

\bibitem[Reichardt \& Leone(2008)Reichardt and Leone]{reichardt08}
Reichardt, J\"org and Leone, Michele.
\newblock (un)detectable cluster structure in sparse networks.
\newblock \emph{Phys. Rev. Lett.}, 101:\penalty0 078701, 2008.

\bibitem[Rohe et~al.(2011)Rohe, Chatterjee, and Yu]{rohe11}
Rohe, Karl, Chatterjee, Sourav, and Yu, Bin.
\newblock {Spectral clustering and the high-dimensional stochastic blockmodel}.
\newblock \emph{The Annals of Statistics}, 39\penalty0 (4):\penalty0
  1878--1915, 2011.

\bibitem[Schwarz(1978)]{schwarz78}
Schwarz, Gideon.
\newblock {Estimating the Dimension of a Model}.
\newblock \emph{The Annals of Statistics}, 6\penalty0 (2):\penalty0 461--464,
  1978.

\bibitem[\v{S}ubelj \& Bajec(2011)\v{S}ubelj and Bajec]{subelj11}
\v{S}ubelj, Lovro and Bajec, Marko.
\newblock Robust network community detection using balanced propagation.
\newblock \emph{Eur. Phys. J. B}, 81\penalty0 (3):\penalty0 353--362, 2011.

\bibitem[Wong(2001)]{wong01}
Wong, R.
\newblock \emph{Asymptotic Approximation of Integrals (Classics in Applied
  Mathematics)}.
\newblock SIAM: Society for Industrial and Applied Mathematics, 2001.

\bibitem[Yang \& Zhao(2015)Yang and Zhao]{yang15}
Yang, Bo and Zhao, Xuehua.
\newblock On the scalable learning of stochastic blockmodel.
\newblock In \emph{AAAI Conference on Artificial Intelligence}, 2015.

\end{thebibliography}
\bibliographystyle{icml2016}

\onecolumn
\appendix
\section{Proof of Theorem~\ref{lem:joint}}

We first derive the Hessian matrix of the log-likelihood.
\begin{proposition}\label{prop:hessian}
  The Hessian matrix of the negative maximum log-likelihood $-\log
  p(\X,\Z\mid\hat\vTheta,\hat\veta)$ is given as a block diagonal
  matrix $\F= (\begin{smallmatrix}
    \F_{\vTheta} & \0\\
    \0&\F_{\veta}
  \end{smallmatrix}) $. The submatrix $\F_{\vTheta}$ is diagonal
  having $K(K+1)/2$ elements, where each element corresponds to the
  second derivative with respect to $\theta_{kl}$ for $k=1,\dots,K$
  and $k\leq l \leq K$. Its diagonal element is given as
\begin{align}\label{eq:F_theta}
  \bar M_{kl} \hat\pi_{kk}(1-\hat\pi_{kk})
\end{align}
where $\bar M_{kl}$ is defined in Theorem~\ref{lem:joint}.
Another submatrix $\F_{\veta}$ is given as
\begin{align}\label{eq:F_eta}
\F_{\veta}=N(\diag(\hat\vgamma_{\quot K})-\vgamma_{\quot
  K}\hat\vgamma_{\quot K}^\T),
\end{align}
where $\vgamma_{\quot K}=(\gamma_1,\dots,\gamma_{K-1})^\T$.
\end{proposition}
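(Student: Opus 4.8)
The plan is to read the block structure straight off the exponential-family form~\eqref{eq:joint-exp}. Since $\log p(\X,\Z\mid\vTheta,\veta)$ is an additive sum of a term involving $\vTheta$ only and a term involving $\veta$ only, every mixed second derivative $\partial^2/\partial\theta_{kl}\partial\eta_m$ vanishes, so the Hessian of the map $(\vTheta,\veta)\mapsto-\log p(\X,\Z\mid\vTheta,\veta)$ — which is what is meant, evaluated at the maximizer $(\hat\vTheta,\hat\veta)$ of Proposition~\ref{prop:joint-maximum} — is automatically block diagonal with blocks $\F_{\vTheta}$ and $\F_{\veta}$. I would then differentiate each block separately, using throughout that $\psi$ and $\phi$ are cumulant generating functions, so that $\psi''$ is the Bernoulli variance and $\nabla\nabla\phi$ the categorical covariance.

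For $\F_{\vTheta}$: in~\eqref{eq:joint-exp} the parameter $\theta_{kl}$ occurs only in the bicluster-$kl$ terms $\sum_{i\le j}z_{ik}z_{jl}\bigl(x_{ij}\theta_{kl}-\psi(\theta_{kl})\bigr)$, so $\partial^2(-\log p)/\partial\theta_{kl}\partial\theta_{k'l'}=0$ for $(k,l)\neq(k',l')$ and $\F_{\vTheta}$ is diagonal with one entry per bicluster $k\le l$, i.e.\ $K(K+1)/2$ entries. The $(kl)$ entry is $c_{kl}\,\psi''(\theta_{kl})$, where $c_{kl}$ counts the node pairs $(i,j)$, $i\le j$, placed in bicluster $kl$; since $\psi'$ is the sigmoid, $\psi''(\theta)=\psi'(\theta)\bigl(1-\psi'(\theta)\bigr)$, which at $\hat\theta_{kl}$ equals $\hat\pi_{kl}(1-\hat\pi_{kl})$. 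The remaining step is the counting identity $c_{kl}=\bar M_{kl}$: one expresses $c_{kl}$ through the sufficient statistics $\bar\z$ and $\overline{\z\z^\T}$, separating the self-edge contributions ($i=j$, $k=l$) that produce the $\ind(k=l)/N$ correction; this yields~\eqref{eq:F_theta}.

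For $\F_{\veta}$: the $\veta$-dependent part of~\eqref{eq:joint-exp} is $\sum_i\bigl(\sum_{k<K}z_{ik}\eta_k-\phi(\veta)\bigr)=\bigl(\sum_{k<K}(\sum_i z_{ik})\eta_k\bigr)-N\phi(\veta)$, which is affine in $\veta$ except for the $-N\phi(\veta)$ term; hence the $\veta$ block of $\nabla\nabla(-\log p)$ is simply $N\,\nabla\nabla\phi(\veta)$. Because $\phi$ is the log-partition function of the $K$-outcome categorical law, $\nabla\nabla\phi(\veta)=\diag(\phi'(\veta))-\phi'(\veta)\phi'(\veta)^\T$, and since $\phi'$ is the softmax, $\phi'(\veta)=\vgamma_{\quot K}$; evaluating at $\hat\veta$ gives $\F_{\veta}=N\bigl(\diag(\hat\vgamma_{\quot K})-\hat\vgamma_{\quot K}\hat\vgamma_{\quot K}^\T\bigr)$, i.e.~\eqref{eq:F_eta}. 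Together with the diagonal $\F_{\vTheta}$ and the vanishing off-diagonal block, this establishes the claimed block-diagonal form of $\F$.

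The only genuinely delicate point is the combinatorial identity $c_{kl}=\bar M_{kl}$: it requires careful bookkeeping of the $i\le j$ convention, the symmetric parametrization of $\vPi$, and self-edges so that the factor $\tfrac12$ and the $\ind(k=l)/N$ term come out exactly as in the definition of $\bar M_{kl}$; everything else reduces to routine differentiation of standard exponential-family cumulant generating functions.
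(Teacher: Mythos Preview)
Your proposal is correct and follows essentially the same route as the paper: block diagonality from additive separability of~\eqref{eq:joint-exp}, the diagonal $\F_{\vTheta}$ with entries $c_{kl}\,\psi''(\hat\theta_{kl})=\bar M_{kl}\,\hat\pi_{kl}(1-\hat\pi_{kl})$ via the sigmoid-derivative identity and the self-edge bookkeeping, and $\F_{\veta}=N\nabla\nabla\phi(\hat\veta)$ as the categorical covariance. The only cosmetic difference is that you invoke the general log-partition/covariance fact for $\F_{\veta}$, whereas the paper carries out the same computation element-wise.
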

\begin{proof}
  Since $\vTheta$ and $\veta$ have no interaction in $\log
  p(\X,\Z|\vTheta,\veta)$, the off-diagonal elements of $\F$ are
  zero. Now we check the Hessian w.r.t. $\vTheta$, which is
  \begin{align}
    &\frapp{}{\theta_{kl}}\{-{N+1 \choose 2}^{-1}\log p(\X,\Z\mid\hat\vTheta,\hat\veta)\}
\\
&=\frac{2}{N(N+1)}\sum_{i\leq j}z_{ik}z_{jl}\psi''(\hat\theta_{kl})
\\
&=\frac{1}{N(N+1)}(\sum_{i, j}z_{ik}z_{jl}\psi''(\hat\theta_{kl}) + \sum_{i}z_{ik}z_{il}\psi''(\hat\theta_{kl}))
\\
&=\frac{N}{N+1}(\bar z_{k}\bar z_{l} + \frac{1}{N^2}\sum_{i}z_{ik}z_{il})\psi''(\hat\theta_{kl}).
\end{align}
Since $\z_i$ is 1-of-$K$-coded, $\sum_{i}z_{ik}z_{il}=0$ for $k\not=l$
and $\sum_{i}z_{ik}z_{ik}=\sum_{i}z_{ik}=N\bar z_k$. Also, since
$\psi'(\cdot)$ is the sigmoid function, its derivatives is written as
\begin{align}
  \psi''(\hat\theta_{kl})
  &=\psi'(\hat\theta_{kl})(1-\psi'(\hat\theta_{kl}))
  \\
  &=\hat\pi_{kl}(1-\hat\pi_{kl})
\end{align}
By substituting these, we obtain Eq.~\eqref{eq:F_theta}. For $\veta$,
  \begin{align}
    &\nabla_{\veta}\nabla_{\veta}\frac{-\log p(\X,\Z\mid\hat\vTheta,\hat\veta)}{N}
=\nabla_{\veta}\nabla_{\veta}\phi(\hat\veta)
  \end{align}
and
\begin{align}
  \frap{\phi(\hat\veta)}{\eta_{k}}&=\phi'_k(\hat\veta)\equiv\frac{\mathrm{e}^{\hat\eta_k}}{1+\sum_{p<K}\mathrm{e}^{\hat\eta_p}},
\\
\frap{^2\phi(\hat\veta)}{\eta_{k}\partial\eta_l}
&=
\frap{\phi'_k(\hat\veta)}{\eta_l}
\\
&=
\ind(k=l)\frac{\mathrm{e}^{\hat\eta_k}}{1+\sum_{p<K}\mathrm{e}^{\hat\eta_p}}
-\frac{\mathrm{e}^{\hat\eta_k}\mathrm{e}^{\hat\eta_l}}{(1+\sum_{p<K}\mathrm{e}^{\hat\eta_p})^2}
\\
&= 
\ind(k=l)\phi'_k(\hat\veta)-\phi'_k(\hat\veta)\phi'_l(\hat\veta)
\\
&= 
\ind(k=l)\hat\gamma_{k}-\hat\gamma_{k}\hat\gamma_{l}.
\end{align}
This yields Eq.~\eqref{eq:F_eta}.
\end{proof}

We then consider the joint marginal. We see that the marginalization
is divided into into two parts:
\begin{align}
  \log p(\X,\Z)
  =&
  \log \int p(\X,\Z\mid\vTheta,\veta)p(\vTheta)p(\veta)\d\vTheta\d\veta
\\
  =&
  \log \int p(\X\mid\Z,\vTheta)p(\vTheta)\d\vTheta +
  \log \int p(\Z\mid\veta)p(\veta)\d\veta.\label{eq:joint-decomposed}
\end{align}
The first term can further be broken down into the marginals with
respect to $\{\theta_{kl}\}$, which is evaluated by the next lemma.
\begin{lemma}\label{lem:deg-mar-theta}
Let $\vTheta_{\quot kl}=\{\theta_{ij}|i\not=k \wedge j\not=l \}$. Then,
  \begin{align}
    &\log \int p(\X|\z_{k}, \z_l, \theta_{kl}) p(\vTheta)\d\theta_{kl} 
    =
    \log p(\vTheta_{\quot kl}) 
\\
    +&
    \begin{cases}
      \log p(\X|\z_{k}, \z_l, \ML{\theta}_{kl}) + \log p(\ML{\theta}_{kl}|\vTheta_{\quot kl})
      -\frac{1}{2}\big(\log \ML{\theta}_{kl}(1-\ML{\theta}_{kl})
      +\log \frac{\bar M_{kl}}{2\pi}
      + O(1/N^2\bar z_k\bar z_l)
      & \text{if $\zz_{kl},\bar z_k,\bar z_l>0$}
\\
     P_{kl}
     &\text{if $\zz_{kl}=0$ and $\bar z_k,\bar z_l>0$}
\\
     0
     &\text{if $\bar z_k=0$ or $\bar z_l=0$}
    \end{cases}
  \end{align}
  $P_{kl}$ and $\bar M_{kl}$ are defined in Theorem~\ref{lem:joint}.
\end{lemma}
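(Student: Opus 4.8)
The plan is to peel off the prior factors that do not involve $\theta_{kl}$, reduce the claim to a single scalar integral, and then split into the three regimes of the lemma. Concretely, I would write $p(\vTheta)=p(\theta_{kl}\mid\vTheta_{\quot kl})\,p(\vTheta_{\quot kl})$ and pull $p(\vTheta_{\quot kl})$, which is constant in $\theta_{kl}$, out of the integral; this produces the $\log p(\vTheta_{\quot kl})$ summand and leaves $J_{kl}\equiv\log\int p(\X\mid\z_k,\z_l,\theta_{kl})\,p(\theta_{kl}\mid\vTheta_{\quot kl})\,\d\theta_{kl}$. Collecting from \eqref{eq:joint-exp} exactly the terms that contain $\theta_{kl}$ — and handling the self-edge (diagonal) contribution precisely as in the proof of Proposition~\ref{prop:hessian} — the $\theta_{kl}$-dependent part of the log-likelihood is a one-parameter Bernoulli exponential family, $\bar M_{kl}\bigl(\bar\pi_{kl}\theta_{kl}-\psi(\theta_{kl})\bigr)$ up to an additive constant, where $\bar M_{kl}$ is the number of node-pairs in bicluster $kl$ as in Theorem~\ref{lem:joint} and $\bar\pi_{kl}$ is the empirical edge fraction in that bicluster, which equals $\ML{\pi}_{kl}$ at the likelihood maximum.

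The two degenerate branches follow immediately from this form. If $\bar z_k=0$ or $\bar z_l=0$, then $\bar M_{kl}=0$ and no edge lies in bicluster $kl$, so the likelihood is constant in $\theta_{kl}$ and, since the conditional prior integrates to one, $J_{kl}=0$. If $\bar z_k,\bar z_l>0$ but $\zz_{kl}=0$, then $\bar\pi_{kl}=0$ and the likelihood factor collapses to $(1+\e^{\theta_{kl}})^{-\bar M_{kl}}$, whose supremum is attained only as $\theta_{kl}\to-\infty$; Laplace's method (Lemma~\ref{thm:laplace}) is therefore inapplicable because the would-be maximizer is a boundary point, so I would simply keep the integral, observing that $\log\int(1+\e^{\theta_{kl}})^{-\bar M_{kl}}p(\theta_{kl}\mid\vTheta_{\quot kl})\,\d\theta_{kl}$ is by definition $P_{kl}$, and noting that $\bar M_{kl}\ge1$ together with a proper conditional prior ensures absolute convergence.

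The substantive case is $\zz_{kl},\bar z_k,\bar z_l>0$, where $0<\bar\pi_{kl}<1$ — the upper bound holding in both the dense and sparse regimes of Section~\ref{sec:infer-sparse-graph} — so $h(\theta_{kl})\equiv-\log p(\X\mid\z_k,\z_l,\theta_{kl})$ is strictly convex with a unique interior minimizer $\ML{\theta}_{kl}=(\psi')^{-1}(\ML{\pi}_{kl})$ and, by Proposition~\ref{prop:hessian}, $h''(\ML{\theta}_{kl})=\bar M_{kl}\,\ML{\pi}_{kl}(1-\ML{\pi}_{kl})>0$; absolute convergence is clear since the likelihood decays exponentially as $\theta_{kl}\to\pm\infty$. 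Writing $h=\bar M_{kl} f$ with $f$ having $N$-independent derivatives and applying Lemma~\ref{thm:laplace} in dimension $D=1$ with $\bar M_{kl}$ playing the role of the large parameter yields $J_{kl}=\log p(\X\mid\z_k,\z_l,\ML{\theta}_{kl})+\log p(\ML{\theta}_{kl}\mid\vTheta_{\quot kl})-\frac{1}{2}\log\bigl(\ML{\pi}_{kl}(1-\ML{\pi}_{kl})\bigr)-\frac{1}{2}\log\frac{\bar M_{kl}}{2\pi}+O(\bar M_{kl}^{-1})$, which recovers the stated expression once one uses $\bar M_{kl}=\Theta(N^2\bar z_k\bar z_l)$ to rewrite the remainder.

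I expect the main obstacle to be exactly this last application of Laplace's method: Lemma~\ref{thm:laplace} is phrased with a prescribed scale $N$ multiplying $f$, whereas here the effective sample size is $\bar M_{kl}=\Theta(N^2)$, so one must reparametrize carefully so that $\bar M_{kl}$ appears as the large parameter, verify that the cubic term of $h$ contributes only at order $\bar M_{kl}^{-1}$ so that the claimed error rate is attained, and keep the factor $\frac{1}{2}$ and the diagonal self-edge correction inside $\bar M_{kl}$ consistent with Proposition~\ref{prop:hessian} throughout. The remaining bookkeeping — matching the per-$(k,l)$ integral here with the grouped definition of $P_{kl}$ used in Theorem~\ref{lem:joint}, and tracking the additive constants absorbed in $\log p(\X\mid\z_k,\z_l,\ML{\theta}_{kl})$ — is routine by comparison.
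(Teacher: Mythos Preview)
Your proposal is correct and follows essentially the same three-case split as the paper's own proof: Laplace's method in the regular case, the exact integral retained as $P_{kl}$ when the maximizer escapes to $-\infty$, and the trivial prior integration when the bicluster is empty. If anything, your treatment is more careful than the paper's terse version---you make explicit the factorization $p(\vTheta)=p(\theta_{kl}\mid\vTheta_{\quot kl})p(\vTheta_{\quot kl})$, identify $\bar M_{kl}$ rather than $N$ as the effective large parameter in Lemma~\ref{thm:laplace}, and flag the bookkeeping needed to reconcile the per-$(k,l)$ integral with the grouped $P_{kl}$ of Theorem~\ref{lem:joint}---all points the paper leaves implicit.
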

\begin{proof}

For the integral, there are three cases we have to consider.

\textbf{Case 1: $\zz_{kl},\bar z_k,\bar z_l>0$}\\
In this case, $-\infty<\ML{\theta}_{kl}<\infty$ and $\psi''(\ML{\theta}_{kl})>0$,
meaning the conditions for Laplace's method are satisfied.  We then
use Laplace's method and obtain the result.

\textbf{Case 2: $\zz_{kl}=0$ and $\bar z_k,\bar z_l>0$}\\
In this case, the maximum occurs at the endpoint
$\ML{\theta}_{kl}\to-\infty$, and we cannot use Laplace's method. We
then leave it as an exact expression of the integral, which is
$P_{kl}$.

\textbf{Case 3: $\bar z_k=0$ or $\bar z_l=0$}\\
In this case, $p(\X|\z_{k}, \z_l, \theta_{kl})=1$ and the integral
boils down the marginalization of the prior $\int
p(\vTheta)\d\theta_{kl}=p(\vTheta_{\quot kl})$.
\end{proof}

The second term of \eqref{eq:joint-decomposed} is evaluated as the next lemma.
\begin{lemma}\label{lem:deg-mar-eta}
  Let $\S=\{k|\bar z_k>0\}\quot K$. Then,
  \begin{align}
    \log \int p(\Z|\veta) p(\veta)\d\veta
    \approx&
    N\sum_{k\in \S}\bar z_k\ML{\eta}_k - N\phi(\ML{\veta}_{\S})
    + \log p(\ML{\veta}_{\S}) 
    - \frac{1}{2}\sum_{k\in\S}\log \bar z_k - \frac{|S|}{2}\log \frac{N}{2\pi}
    +Q_{\S}(N)
  \end{align}
  where $Q_{\S}$ and $\ML{\alpha}$ are defined in
  Theorem~\ref{lem:joint}.
\end{lemma}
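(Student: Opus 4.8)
The plan is to follow the template of Lemma~\ref{lem:deg-mar-theta}: partition the $K-1$ integration variables into the block $\veta_{\S}=(\eta_k)_{k\in\S}$ attached to the non-empty clusters and the block $\veta_{\quot\S}$ attached to the empty ones, evaluate the former by Laplace's method (Lemma~\ref{thm:laplace}), and leave the latter as an exact remainder, which will turn out to be $Q_{\S}$. Without loss of generality I assume the reference cluster $K$ is non-empty (relabel if necessary); otherwise the reduced maximizer below escapes to the boundary and a re-referencing is required anyway.

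First I would simplify the integrand. Because $\bar z_k=0$ for every $k<K$ with $k\notin\S$, the likelihood collapses to $\log p(\Z\mid\veta)=N\sum_{k\in\S}\bar z_k\eta_k-N\phi(\veta)$, whose linear part is insensitive to $\veta_{\quot\S}$. Setting $\alpha(\veta_{\S})=1+\sum_{k\in\S}e^{\eta_k}$, the cumulant term factorizes as $\phi(\veta)=\log\alpha(\veta_{\S})+\log\bigl(1+\sum_{l\notin\S}e^{\eta_l-\log\alpha(\veta_{\S})}\bigr)$. Writing the prior as $p(\veta)=p(\veta_{\S})\,p(\veta_{\quot\S}\mid\veta_{\S})$ and integrating out $\veta_{\quot\S}$ first, the marginal becomes $\int e^{h(\veta_{\S})}\,J(\veta_{\S})\,p(\veta_{\S})\,\d\veta_{\S}$, where $h(\veta_{\S})=N\sum_{k\in\S}\bar z_k\eta_k-N\log\alpha(\veta_{\S})$ is the log-likelihood of the reduced multinomial over the categories $\S\cup\{K\}$, and $J(\veta_{\S})=\int\bigl(1+\sum_{l\notin\S}e^{\eta_l-\log\alpha(\veta_{\S})}\bigr)^{-N}p(\veta_{\quot\S}\mid\veta_{\S})\,\d\veta_{\quot\S}$ collects the contribution of the empty clusters.

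Next I would apply Lemma~\ref{thm:laplace} to the $\veta_{\S}$-integral with $Nf=-h$ and $g=J\cdot p(\veta_{\S})$. The conditions hold: $f$ has the unique interior maximizer $\ML{\veta}_{\S}$ determined by $\ML{\gamma}_k=\bar z_k$ (i.e.\ $\ML{\eta}_k=\log(\bar z_k/\bar z_K)$), and its Hessian is the multinomial Fisher information $\diag(\bar z_{\S})-\bar z_{\S}\bar z_{\S}^{\T}$ with $\bar z_{\S}=(\bar z_k)_{k\in\S}$, which is positive definite; the matrix-determinant lemma evaluates its determinant as a product of cluster proportions, yielding the $-\tfrac12\sum_{k\in\S}\log\bar z_k$ penalty together with the $-\tfrac{|\S|}{2}\log\tfrac{N}{2\pi}$ term. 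At the maximizer $\alpha(\ML{\veta}_{\S})=\ML{\alpha}$, so $J(\ML{\veta}_{\S})$ equals $e^{Q_{\S}}$ by the very definition of $Q_{\S}$, hence $\log g(\ML{\veta}_{\S})=\log p(\ML{\veta}_{\S})+Q_{\S}$, while $-Nf(\ML{\veta}_{\S})=N\sum_{k\in\S}\bar z_k\ML{\eta}_k-N\phi(\ML{\veta}_{\S})$. Collecting these reproduces the claimed expression.

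The hard part is that $g=J\cdot p(\veta_{\S})$ depends explicitly on $N$ through $J$, so Lemma~\ref{thm:laplace} does not apply verbatim. I would remedy this by showing $J$ varies slowly near $\ML{\veta}_{\S}$: the change of variables $\tilde\eta_l=\eta_l-\log\alpha(\veta_{\S})$ reveals that $J$ depends on $\veta_{\S}$ only through $\log\alpha(\veta_{\S})$ (and the conditional prior), and the weight $(1+\sum_l e^{\tilde\eta_l})^{-N}$ forces $\sum_l e^{\tilde\eta_l}=O(1/N)$ under the integral, so each derivative of $\log J$ with respect to $\log\alpha$ stays $O(1)$ uniformly on an $O(N^{-1/2})$ neighborhood of $\ML{\veta}_{\S}$. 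Since the first-order correction integrates to zero against the Laplace Gaussian and the second-order one is $O(1/N)$, the expansion survives with the stated ``$\approx$'' (that is, $O(1/N)$) error; the remaining estimates — that the regions $\|\veta_{\S}-\ML{\veta}_{\S}\|=\Omega(1)$ and $\eta_l=\Omega(\log N)$ contribute negligibly, and that the degenerate case $\S=\emptyset$ is immediate — are routine absolute-integrability bounds.
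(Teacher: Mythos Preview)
Your proposal is correct and follows essentially the same route as the paper: factor the cumulant as $\phi(\veta)=\log\alpha(\veta_{\S})+\log\bigl(1+\sum_{l\notin\S}e^{\eta_l-\log\alpha}\bigr)$, apply Laplace's method (Lemma~\ref{thm:laplace}) to the $\veta_{\S}$-block using the reduced multinomial Hessian, and isolate the empty-cluster contribution as the exact integral $Q_{\S}$ via the shift $\eta_l\mapsto\eta_l-\log\alpha$. The only ordering difference is that the paper performs the change of variables $\xi_l=\eta_l-\log\alpha$ \emph{before} invoking Laplace and then evaluates the residual $\vxi$-integral at $\alpha=\ML{\alpha}$, whereas you integrate out $\veta_{\quot\S}$ first and carry $J(\veta_{\S})$ into $g$; this is the same computation, and in fact your extra paragraph justifying why the $N$-dependence of $J$ does not spoil the $O(1/N)$ Laplace error makes a point the paper leaves implicit.
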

\begin{proof}
  By denoting $\alpha = 1+\sum_{k\in\S}\e^{\eta_k}$, $\beta=1 +
  \sum_{l\notin\S}\e^{\eta_l-\log\alpha}$ and using the relation
  ${\e^x\over a}=\e^{x-\log a}$, we have
  \begin{align}
    \log \int p(\Z|\veta) p(\veta)\d\veta 
    =& \log\int \exp(N\sum_{k\in \S}\bar
    z_k\ML{\eta}_k) \left(\frac{1}{1+\sum_{k\in\S}\e^{\ML{\eta}_k} +
        \sum_{l\notin\S}\e^{\eta_l}}\right)^N p(\veta)
\d\veta
\\
    =& \log\int \exp(N\sum_{k\in \S}\bar
    z_k\ML{\eta}_k) \left(\frac{1}{\alpha\beta}\right)^N p(\veta)
\d\veta
  \end{align}
  By using change of variable $\xi_l=\eta_l - \log\alpha$ for
  $l\notin\S$, we can rewrite this as
  \begin{align}\label{eq:confusing-marginal-eta}
    \log\int \exp(N\sum_{k\in \S}\bar
    z_k\ML{\eta}_k) \left(\frac{1}{\alpha}\right)^N\left(\frac{1}{1+\sum_{l\notin\S}\e^{\xi_l}}\right)^N p(\veta_{\S},\vxi_{\quot \S} + \log\alpha)
    \d\veta_{\S}\d\vxi_{\quot \S}.
  \end{align}
  In \eqref{eq:confusing-marginal-eta}, the marginal
  w.r.t. $\veta_{\S}$ approximated by Laplace's method as
\begin{align}
  \mathcal{L} \equiv N\sum_{k\in \S}\bar z_k\ML{\eta}_k - N\phi(\ML{\veta}_{\S})
    + \log p(\ML{\veta}_{\S}) 
    - \frac{1}{2}\log|\F_{\S}| - \frac{|S|}{2}\log \frac{N}{2\pi}
\end{align}
where
\begin{align}
  \log|\F_{\S}| =& \log|\nabla_{\ML{\veta}_{\S}}\nabla_{\ML{\veta}_{\S}}\phi(\veta)|
\\
  =& \log(1 - \sum_{s\in\S}\ML{\gamma}_{s}) + \log\sum_{s\in\S}\ML{\gamma}_{s}
\\
  =& \log\ML{\gamma}_{K} + \sum_{s\in\S}\ML{\gamma}_{s}
\\
  =& \sum_{s\in\S\cup K}\bar{\z}_{s}.
\end{align}
Then, \eqref{eq:confusing-marginal-eta} is rewritten as
\begin{align}
  \eqref{eq:confusing-marginal-eta}
  \approx&
  \mathcal{L}
  +\log\int \left(\frac{1}{1+\sum_{l\notin\S}\e^{\xi_l}}\right)^N p(\vxi_{\quot \S} + \log\ML{\alpha}|\ML{\veta}_{\S})
  \d\vxi_{\quot \S} 
  \\
  \approx&
  \mathcal{L}+Q_{\S}(N).
  \end{align}
\end{proof}

Combining Lemmas~\ref{lem:deg-mar-theta} and \ref{lem:deg-mar-eta}
gives \eqref{eq:asymptotic-joint}.



\end{document}